\newenvironment{manualtheorem}[1]{%
    \noindent\textbf{Theorem #1.} \itshape
}{}
\DeclareFontFamily{U}{mathx}{\hyphenchar\font45}
\DeclareFontShape{U}{mathx}{m}{n}{
      <5> <6> <7> <8> <9> <10>
      <10.95> <12> <14.4> <17.28> <20.74> <24.88>
      mathx10
      }{}
\DeclareSymbolFont{mathx}{U}{mathx}{m}{n}
\DeclareMathSymbol{\bigtimes}{1}{mathx}{"91}
\definecolor{DarkRed}{rgb}{0.1,0.1,0.8}
\definecolor{DarkBlue}{rgb}{0.1,0.1,0.5}
\definecolor{ForestGreen}{rgb}{0.1333,0.5451,0.1333}
\definecolor{DarkRed}{rgb}{0.8,0,0.4}
\definecolor{Red}{rgb}{0.8,0,0.4}
\crefname{property}{property}{Property}
\crefname{equation}{eq}{Eq}
\def\BState{\State\hskip-\ALG@thistlm}
\newcolumntype{P}[1]{>{\RaggedRight\hspace{0pt}}p{#1}}
\newcolumntype{L}{>{\arraybackslash}m{3cm}}
\newcolumntype{q}{>{\arraybackslash}m{4.5cm}}
\definecolor{aliceblue}{rgb}{0.94, 0.97, 1.0}
\definecolor{blizzardblue}{rgb}{0.67, 0.9, 0.93}
\newtheorem{lemma}{Lemma}[section]
\newtheorem{theorem}[lemma]{Theorem}
\newtheorem{definition}[lemma]{Definition}
\newtheorem*{claim*}{Claim}
\newtheorem*{proposition*}{Proposition}
\newtheorem*{lemma*}{Lemma}
\newtheorem*{problem*}{Problem}
\newtheorem{example}{Example}
\crefname{lemma}{Lemma}{Lemmas}
\crefname{note}{Note}{Notes}
\crefname{claim}{Claim}{Claims}
\newtheorem{mdresult}{Result}
\newtheorem{remark}{Remark}
\newtheoremstyle{restate}{}{}{\itshape}{}{\bfseries}{~(restated).}{.5em}{\thmnote{#3}}
\theoremstyle{restate}
\theoremstyle{definition}
\newtheorem{mdalg}{Algorithm}
\renewcommand{\qed}{\nobreak \ifvmode \relax \else
      \ifdim\lastskip<1.5em \hskip-\lastskip
      \hskip1.5em plus0em minus0.5em \fi \nobreak
      \vrule height0.75em width0.5em depth0.25em\fi}
\newenvironment{tbox}{\begin{tcolorbox}[
		enlarge top by=5pt,
		enlarge bottom by=5pt,
		 breakable,
		 boxsep=0pt,
                  left=4pt,
                  right=4pt,
                  top=10pt,
                  arc=0pt,
                  boxrule=1pt,toprule=1pt,
                  colback=white
                  ]
	}
{\end{tcolorbox}}
\newcommand{\II}{\ensuremath{\mathbb{I}}}
\newcommand{\mireal}[1][]{
  \ifx\relax#1\relax%
    \II(\mione \,; \mitwo)%
  \else%
    \II(\mione \,; \mitwo\mid #1)%
  \fi
}
\newif\ifhidecomments
    \newcommand{\avrim}[1]{}
    \newcommand{\idan}[1]{}
    \newcommand{\donya}[1]{}
    \newcommand{\dravy}[1]{}
    \newcommand{\kn}[1]{}
    \newcommand{\matt}[1]{}
    \newcommand{\avrim}[1]{\noindent{\textcolor{blue}{\{{\bf Avrim} \em #1\}}}}
    \newcommand{\idan}[1]{\noindent{\textcolor{violet}{\{{\bf Idan} \em #1\}}}}
    \newcommand{\donya}[1]{\noindent{\textcolor{magenta}{\{{\bf Donya:} \em #1\}}}}
    \newcommand{\dravy}[1]{\noindent{\textcolor{olive}{\{{\bf Dravy:} \em #1\}}}}
    \newcommand{\kn}[1]{\noindent{\textcolor{orange}{\{{\bf kn:} \em #1\}}}}
    \newcommand{\matt}[1]{\noindent{\textcolor{teal}{\{{\bf Matt:} \em #1\}}}}
\newcommand{\CC}{\mathcal{C}}
\newcommand{\CL}{\mathcal{L}}
\newcommand{\optr}{{\rm OPTR^4}}
\newcommand{\rhat}{{\rm \widehat{R^4}}}
\newcommand{\optrhat}{{\rm \widehat{OPTR^4}}}
\newcommand{\mcA}{\mathcal A}
\newcommand{\mcC}{\mathcal C}
\newcommand{\mcH}{\mathcal H}
\newcommand{\mcX}{\mathcal X}
\newcommand{\mcY}{\mathcal Y}
\title{Regularized Robustly Reliable Learners and Instance Targeted Attacks}
\author{
{\bf Avrim Blum \qquad
Donya Saless}\footnote{Student author. The authors are listed in alphabetical order.}
 \\[1ex]
Toyota Technological Institute at Chicago\\
{\text{\{avrim,donya\}@ttic.edu}}
}
\date{\today} 
\begin{document}
\maketitle

\begin{abstract}%
Instance-targeted data poisoning attacks, where an adversary corrupts a training set to induce errors on specific test points, have raised significant concerns. \citet{balcan2022robustly} proposed an approach to addressing this challenge by defining a notion of {\em robustly-reliable learners} that provide per-instance guarantees of correctness under well-defined assumptions, even in the presence of data poisoning attacks. They then give a generic optimal (but computationally inefficient) robustly-reliable learner as well as a computationally efficient algorithm for the case of linear separators over log-concave distributions. 

In this work, we address two challenges left open by \citet{balcan2022robustly}. The first is that the definition of robustly-reliable learners in \citet{balcan2022robustly} becomes vacuous for highly-flexible hypothesis classes: if there are two classifiers $h_0,h_1 \in \mathcal{H}$ both with zero error on the training set such that $h_0(x)\neq h_1(x)$, then a robustly-reliable learner must abstain on $x$. We address this problem by defining a modified notion of {\em regularized} robustly-reliable learners that allows for nontrivial statements in this case. The second is that the generic algorithm of \citet{balcan2022robustly} requires re-running an ERM oracle (essentially, retraining the classifier) on each test point $x$, which is generally impractical even if ERM can be implemented efficiently.  To tackle this problem, we show that at least in certain interesting cases we can design algorithms that can produce their outputs in time sublinear in training time, by using techniques from dynamic algorithm design.
\end{abstract}

\section{Introduction}

As Machine Learning and AI are increasingly used for critical decision-making, it is becoming more important than ever that these systems be trustworthy and reliable.  This means they should know (and say) when they are unsure, they should be able to provide real explanations for their answers and why those answers should be trusted (not just how the prediction was made), and they should be robust to malicious or unusual training data and to adversarial or unusual examples at test time.

\cite{balcan2022robustly} proposed an approach to addressing this problem by defining a notion of {\em robustly-reliable learners} that provide per-instance guarantees of correctness under well-defined assumptions, even in the presence of data poisoning attacks.  This notion builds on the definition of {\em reliable learners} by \cite{rivest1988learning}.  In brief, a robustly-reliable learner $\CL$ for some hypothesis class $\mathcal{H}$, when given a (possibly corrupted) training set $S'$, produces a classifier $\mathcal{L}_{S'}$ that on any example $x$ outputs both a prediction $y$ and a confidence level $k$.  The interpretation of the pair $(y,k)$ is that $y$ is guaranteed to equal the correct label $f^*(x)$ if (a) the target function $f^*$ indeed belongs to $\mathcal{H}$ and (b) the set $S'$ contains at most $k$ corrupted points; here, $k<0$ corresponds to abstaining.   \cite{balcan2022robustly} then provide a generic pointwise-optimal algorithm for this problem: one that for each $x$ outputs the largest possible confidence level of any robustly-reliable learner.  They also give efficient algorithms for the important case of homogeneous linear separators over uniform and log-concave distributions, as well as analysis of the probability mass of points for which it outputs large values of $k$.

In this work, we address two challenges left open by \cite{balcan2022robustly}. The first is that the definition of robustly-reliable learners in \cite{balcan2022robustly} becomes vacuous for highly-flexible hypothesis classes: if there are two classifiers $h_0,h_1 \in \mathcal{H}$ both with zero error on the training set such that $h_0(x)\neq h_1(x)$, then a robustly-reliable learner must abstain on $x$. We address this problem by defining a modified notion of {\em regularized} robustly-reliable learners that allows for nontrivial statements in this case. The second is that the generic algorithm of \cite{balcan2022robustly} requires re-running an ERM oracle (essentially, retraining the classifier) on each test point $x$, which is generally impractical even if ERM can be implemented efficiently.  To tackle this problem, we show that at least in certain interesting cases we can design algorithms that can make predictions in time sublinear in training time, by using techniques from dynamic algorithm design, such as \cite{6979023}. 

\subsection{Main contibutions}  
Our main contributions are three-fold.

\begin{enumerate}
\item The first is a definition of a {\em regularized} robustly-reliable learner, and of the {\em region} of points it can certify, that is appropriate for highly-flexible hypothesis classes.  We then analyze the largest possible set of points that any regularized robustly-reliable learner could possibly certify, and provide a {\em generic pointwise-optimal algorithm} whose regularized robustly-reliable region ($\rm R^4$) matches this optimal set ($\optr$).

\item The second is an analysis of the probability mass of this $\optr$ set in some interesting special cases, proving sample complexity bounds on the number of training examples needed (relative to the data poisoning budget of the adversary and the complexity of the target function) in order for $\optr$ to w.h.p.~have a large probability mass.

\item Finally, the third is an analysis of efficient regularized robustly-reliable learning algorithms for interesting cases, with a special focus on algorithms that are able to output their reliability guarantees more efficiently than re-training the entire classifier.  In one case we do this through a bi-directional dynamic programming algorithm, and in another case by utilizing algorithms for maximum matching that are able to quickly re-establish the maximum matching when a few nodes are added to or deleted from the graph.
\end{enumerate}

In a bit more detail, for a given complexity (or ``unnaturalness'') measure $\CC$, a regularized robustly-reliable learner $\CL$ is
given as input a possibly-corrupted training set $S'$ and outputs a function (an ``extended classifier") $\CL_{S'}$. The extended classifier $\CL_{S'}$ takes in two inputs: a test example $x$ and a poisoning budget $b$, and outputs a prediction $y$ along with two complexity levels $c_{\text{low}}$ and $c_{\text{high}}$.  The meaning of the triple $(y, c_{\text{low}}, c_{\text{high}})$ is that $y$ is guaranteed to be the correct label $f^*(x)$ if the training set $S'$ contains at most $b$ poisoned points and the complexity of the target function $f^*$ is less than $c_{\text{high}}$. Moreover, there should {\em exist} a classifier $f$ of complexity at most $c_{low}$ that makes at most $b$ mistakes on $S'$ and has $f(x)=y$.  Thus, if we, as a user, believe that a complexity at or above $c_{high}$ is ``unnatural'' and that the training set should contain at most $b$ corrupted points, then we can be confident in the predicted label $y$.
We then analyze the set of points for which $c_{low}\leq c < c_{high}$ for a given complexity level $c$, and show there exists an algorithm that is simultaneously optimal in terms of the size of this set for all values of $c$.

The above description has been treating the complexity function $\CC$ as a data-independent quantity. However, in many cases we may want to consider notions of ``unnaturalness'' that involve how the classifier relates to the test point, the training examples, or both.  For instance, if $x$ is surrounded by positive examples, we might view a positive classification as more natural than a negative one even if we allow arbitrary functions as classifiers; one way to model this would be to define the complexity of a classifier $h$ with respect to test point $x$ as $1/r(h,x)$ where $r(h,x)$ is the distance of $x$ to $h$'s decision boundary.
Or, we might be interested in the margin of the classifier with respect to all the data observed (the minimum distance to the decision boundary out of all data seen including the training data and the test point).  Our framework will allow for these notions as well, and several of the concrete settings we discuss will use them.

\subsection{Context and Related Work}
\noindent {\bf  Learning from malicious noise.} The malicious noise model was introduced and analyzed in \cite{valiant1985learning},  
\cite{doi:10.1137/0222052}, \cite{BSHOUTY2002255}, \cite{inproceedingsKlivans2009halfspace}, \cite{awasthi2017power}.  See also the book chapter \cite{balcan2021noise}. 
However, the focus of this work was on the overall error rate of the learned classifier, rather than on instance-wise guarantees that could be provided on individual predictions.  

\smallskip

\noindent {\bf Instance targeted poisoning attacks.} Instance-targeted poisoning attacks were first introduced by \cite{10.1145/1128817.1128824}. Subsequent work by \cite{217486} and \cite{shafahi2018poisonfrogstargetedcleanlabel} demonstrated empirically that such attacks can be highly effective, even when the adversary only adds {\em correctly-labeled data} to the training set (known as ``clean-label attacks''). These targeted poisoning attacks have attracted considerable attention in recent years due to their potential to compromise the trustworthiness of learning systems \citep{geiping2021witchesbrewindustrialscale, 6868201mozafari, chen2017targetedbackdoorattacksdeep}. Theoretical research on defenses against instance-targeted poisoning attacks has largely focused on developing stability certificates, which indicate when an adversary with a limited budget cannot alter the resulting prediction. For instance, \cite{levine2021deeppartitionaggregationprovable} suggest partitioning the training data into $k$ segments, training distinct classifiers on each segment, and using the strength of the majority vote from these classifiers as a stability certificate, as any single poisoned point can affect only one segment. Additionally, \cite{gao2021learningcertificationinstancetargetedpoisoning} formalize various types of adversarial poisoning attacks and explore the problem of providing stability certificates for them in both distribution-independent and distribution-specific scenarios.
\cite{balcan2022robustly} instead propose correctness certificates: in contrast to the previous results that certify when a budget-limited adversary could not {\em change} the learner's prediction, their work focuses on certifying the prediction made is {\em correct}.  This model was extended in \cite{balcan2023reliable} to address test-time attacks as well.
The model of \cite{balcan2022robustly} can be seen as a generalization of the reliable-useful learning framework of \cite{rivest1988learning} and the perfect selective classification model of \cite{JMLR:v11:el-yaniv10a}, which focus on the simpler scenario of learning from noiseless data, extending it to the more complex context of noisy data and adversarial poisoning attacks.

\section{Formal Setup}
We consider a learner aiming to learn an unknown target function $f^*:\mathcal{X}\rightarrow\mathcal{Y}$, where $\mathcal{X}$ denotes the instance space and $\mathcal{Y}$ the label space.  The learner is given a training set $S' = \{ \{(x_i, y_i)\}_{i=1}^n  | x \in  \mathcal{X}, y \in  \mathcal{Y}  \}$, which might have been poisoned by a malicious adversary.  Specifically, we assume $S'$ consists of an original dataset $S$ labeled according to $f^*$, with possibly additional examples, whose labels need not match $f^*$, added by an adversary.  For original dataset $S$ and non-negative integer $b$, it will be helpful to define \( \mathcal{A}_b (S) \) as the possible training sets that could be produced by an attacker with corruption budget $b$. That is, \( \mathcal{A}_b (S) \) consists of all $S'$ that could be produced by adding at most $b$ points to $S$.  Given the training set $S'$ and test point $x$, the learner's goal will be to output a label $y$ along with a guarantee that $y=f^*(x)$ so long as $f^*$ is sufficiently ``simple'' and the adversary's corruption budget was sufficiently small.  Conceptually, we will imagine that the adversary might have been using its entire corruption budget specifically to cause us to make an error on $x$. Our basic definitions will {\em not} require that the original set $S$ be drawn iid (or that the test point $x$ be drawn from the same distribution) but our guarantees on the probability mass of points for which a given strength of guarantee can be given will require such assumptions.

\paragraph{Complexity measures}
To establish a framework where certain classifiers or classifications are considered more \emph{natural} than others, we assume access to a  \emph{complexity measure} $\CC$ that  formalizes this degree of un-naturalness. We consider several distinct types of complexity measures.
\begin{enumerate}
    \item {\em Data independent}: Each classifier $h$ has a well-defined real-valued complexity $\CC(h)$.  For example, in $\mathbb{R}^1$, a natural measure of complexity of a Boolean function is the number of alternations between positive and negative regions (See Definition \ref{NumberofAlterationss}).
    \item {\em Test data dependent}: Here, complexity is a function of the classifier $h$ and the test point $x_{test}$.  For example, suppose $\mcX=\mathbb{R}^d$ and we allow arbitrary classifiers. If $x_{test}$ is inside a cloud of positive examples, then while there certainly exist classifiers that perform well on the training data and label $x_{test}$ negative, they would necessarily have a small margin with respect to $x_{test}$. This motivates a complexity measure $\mathcal{C}(h,x_{test}) = \frac{1}{r(x_{test},h)}$ where $r$ is the distance of $x_{test}$ to $h$'s decision boundary.
    (See Definition \ref{localmarginrevised}).

     \item {\em Training data dependent}: This complexity is a function of the classifier $h$ and the training data.
     An example of this measure is the Interval Probability Mass complexity, detailed in the Appendix (See Definition \ref{IntervalProbabilityMassComplexity}).
    
     \item {\em Training and test data dependent}: Here, complexity is a function of the classifier $h$, the training data,
     and the test point $x_{test}$.  For instance, we might be interested in the margin $r$ of a classifier with respect to both the training set and the test point, and define complexity to be $\frac{1}{r}$ 
     (See Definition \ref{globalmargin}).

\end{enumerate}

In section \ref{section4}, and Appendix \ref{moremeasures}, we introduce several complexity measures across all four types,  for assessing the structure and behavior of classifiers. 

We now define the notion of a \emph{regularized-robustly-reliable} learner in the face of instance-targeted attacks.  This learner, for any given test example $x_{\text{test}}$, outputs both a prediction $y$ and values $c_{\text{low}}$ and $c_{\text{high}}$,
such that $y$ is guaranteed to be correct so long as the target function $f^*$ has complexity less than $c_{\text{high}}$ and the adversary has at most corrupted $b$ points.  Moreover, there should exist a candidate classifier of complexity at most $c_{\text{low}}$.

\begin{definition}[{Regularized Robustly Reliable Learner}]
A learner $\mathcal{L}$ is regularized-robustly-reliable with respect to complexity measure $\mathcal{C}$ if, given training set $S'$, the learner outputs a function $\mathcal{L}_{S'}: \mathcal{X} \times \mathbb{Z}^{\geq 0}  \to \mathcal{Y} \times \mathbb{R} \times \mathbb{R}$ with the following properties: Given a test point $x_{\text{test}}$, and mistake budget $b$, 
 $\mathcal{L}_{S'}(x_{\text{test}},b)$ outputs a label $y$ along with complexity levels $c_{\text{low}}, c_{\text{high}}$ such that
 \begin{enumerate}
    \item [(a)] There exists a classifier $h$ of complexity $c_{low}$ (with respect to $x_{test}$ if test-data-dependent and with respect to some $S$ consistent with $h$ such that $S'\in \mcA_b(S)$ if training-data-dependent) with at most $b$ mistakes on $S'$ such that $h(x_{test})=y$, and 
    \item[(b)] There is no classifier $h'$ of complexity less than $c_{high}$ (with respect to $x_{test}$ if test-data-dependent and with respect to {\em any} $S$ consistent with $h'$ such that $S'\in\mcA_b(S)$ if training-data-dependent) with at most $b$ mistakes on $S'$ such that $h'(x_{test})\neq y$.
 \end{enumerate}
 So, if $\mathcal{L}_{S'}(x_{\text{test}}, b) = (y, c_{\text{low}}, c_{\text{high}})$, then we are guaranteed that
 $y = f^*(x_{test})$ if  $S' \in \mathcal{A}_{b}(S)$ for some true sample set $S \in \mcX \times \mcY$ and $f^*$ has complexity less than $c_{\text{high}}$ with respect to $x_{test}$ and $S$.  
  \label{definitionRRR}
\end{definition}

\begin{remark}
    We define $\CL_{S'}$ as taking $b$ as an input, whereas in \cite{balcan2022robustly}, the corruption budget $b$ is an output.  We could also define $\CL_{S'}$ as taking only $x_{test}$ as input and producing output {\em vectors} ${\bf y}, {\bf c}_{low}, {\bf c}_{high}$, where ${\bf y}[b]$, ${\bf c}_{low}[b]$ and ${\bf c}_{high}[b]$ correspond to the outputs of $\CL_{S'}(x_{test},b)$ in Definition \ref{definitionRRR}.  We define $\CL_{S'}$ to take $b$ as an input primarily for clarity of exposition, and all our algorithms indeed can be adapted to output a table of values if desired. 
\end{remark}

\begin{remark}
    When the learner outputs a value $c_{\text{high}} \leq c_{\text{low}}$, we interpret it as “abstaining." 
\end{remark}

Definition \ref{definitionRRR} motivates the following generic algorithm for implementing a regularized robustly reliable (RRR)
learner, for data-independent complexity measures.

\begin{algorithm}
    \caption{Generic RRR learner for data-independent complexity measures $\mcC$}
    1. Given $S'$, find the classifier $h_{S'}$ of minimum complexity that makes at most $b$ mistakes on $S'$.
    
    2. Given test point $x_{test}$, output $(y, c_{low}, c_{high})$ where $y=h_{S'}(x)$, $c_{low} = \mcC(h_{S'})$, and $c_{high} = \min\{\mcC(h): h \mbox{ makes at most } b \mbox{ mistakes on }S'  \mbox{ and } h(x)\neq h_{S'}(x)\}.$
    \label{alg-generic-data-indep}
\end{algorithm}

\begin{remark}
    Notice that the generic Algorithm \ref{alg-generic-data-indep} can compute $h_{S'}$ and $c_{low}$ at training time, but requires re-solving an optimization problem on each test example to compute $c_{high}$. (For complexity measures that depend on the test point, even $c_{low}$ may require re-optimizing). 
\end{remark}

We now define the notion of a regularized robustly reliable region.
\begin{definition}[Empirical Regularized Robustly Reliable Region]
     For RRR learner $\CL$, dataset $S'$, poisoning budget $b$, and complexity bound $c$, the {\em empirical regularized robustly reliable region} $\rhat_{\CL}(S',b,c)$ is the set of points $x$ for which $\CL_{S'}(x,b)$ outputs $c_{low},c_{high}$ such that $c_{low} \leq c < c_{high}$.  
\end{definition}

It turns out that similarly to \cite{balcan2022robustly}, one can characterize the largest possible set $\rhat_{\CL}(S',b,c)$ in terms of agreement regions.  We describe the characterization below, and prove its optimality in Section \ref{sec3}.

\begin{definition}[Optimal Empirical Regularized Robustly Reliable Region]
    Given  dataset $S'$, poisoning budget $b$, and complexity bound $c$, the {\em optimal empirical regularized robustly reliable region} $\optrhat(S',b,c)$ is the agreement region of the set of functions of complexity at most $c$ that make at most $b$ mistakes on $S'$. If there are no such functions, then $\optrhat(S',b,c)$ is undefined. (For data-dependent complexity measures, we define the complexity of a function as its minimum possible complexity over possible original training sets $S$, and the point in question if test-data-dependent.)
    \label{def:opterrr} 
\end{definition}

\begin{figure} [h]
	{\centering
		\includegraphics[width=0.5\columnwidth]{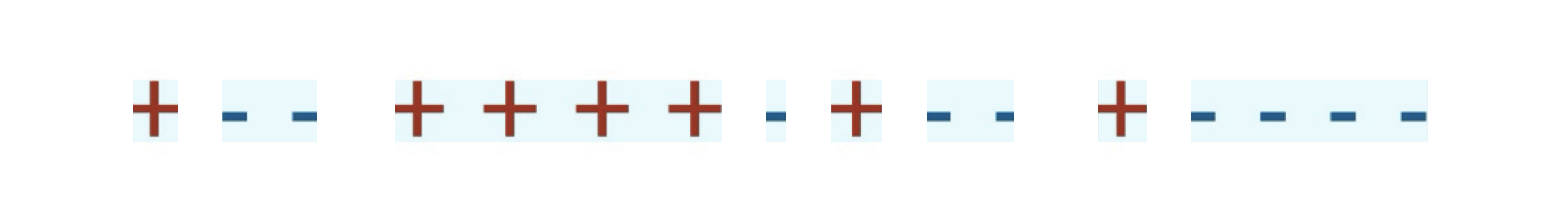} 
		\caption{The blue regions depict $\optrhat(S',0,8)$ described in Definition \ref{def:opterrr} for the number-of-alternations complexity measure, mistake budget \(b=0\), and complexity level $c=8$.} \label{figureoptr4empirical}}
  
\end{figure}

In the next section we give a regularized robustly reliable learner $\CL$ such that for all $S'$ and $b$, $\CL$ satisfies $\rhat_{\CL}(S',b,c) = \optrhat(S',b,c)$ simultaneously for all values of $c$.
We then prove that any other regularized robustly reliable learner $\CL'$ must have $\rhat_{\CL'}(S',b,c) \subseteq \optrhat(S',b,c)$. This justifies the use of the term {\em optimal} in Definition \ref{def:opterrr}.

\section{General Results
} \label{sec3}

Recall that a regularized robustly reliable (RRR) learner $\mathcal{L}$ is given a sample $S'$ and outputs a function $\mathcal{L}_{S'}(x,b) = (y, c_{\text{low}}, c_{\text{high}})$ such that if $S' = \mathcal{A}_{b}(S)$ for some (unknown) uncorrupted sample $S$ labeled by some (unknown) target concept $f^*$, and $\CC(f^*) \in [c_{\text{low}} ,c_{\text{high}}) $, then $y = f^*(x)$.
\begin{theorem}\label{thm:empiricalopt}
For any RRR learner $\CL'$ we have $\rhat_{\CL'}(S',b,c) \subseteq \optrhat(S',b,c)$.  Moreover, there exists an RRR learner $\CL$ such that $\rhat_{\CL}(S',b,c) = \optrhat(S',b,c)$. 
\end{theorem}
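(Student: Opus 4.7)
The plan is to prove the two inclusions of Theorem~\ref{thm:empiricalopt} in turn. Both follow by directly unpacking Definitions~\ref{definitionRRR} and~\ref{def:opterrr}; the second direction amounts to exhibiting a learner whose output is explicitly built from the same agreement-region characterization that defines $\optrhat$.

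For the containment $\rhat_{\CL'}(S',b,c) \subseteq \optrhat(S',b,c)$, fix any $x \in \rhat_{\CL'}(S',b,c)$ and write $\CL'_{S'}(x,b) = (y, c_{low}, c_{high})$ with $c_{low} \leq c < c_{high}$. Let $\mathcal{F}$ denote the set of classifiers of complexity at most $c$ that make at most $b$ mistakes on $S'$. Property (a) of Definition~\ref{definitionRRR} supplies a classifier in $\mathcal{F}$ (of complexity $c_{low} \leq c$) with $h(x)=y$, so $\mathcal{F}$ is non-empty and $\optrhat(S',b,c)$ is defined. Property (b) says no $h' \in \mathcal{F}$ (each with complexity $<c_{high}$) satisfies $h'(x)\neq y$, so every element of $\mathcal{F}$ predicts $y$ at $x$, placing $x$ in the agreement region $\optrhat(S',b,c)$.

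For the matching direction, I exhibit a single $\CL$ that achieves equality for every $c$ simultaneously. Given $S'$, $x$, $b$, define
\begin{align*}
h^* &\in \arg\min\{\CC(h) : h \text{ makes at most } b \text{ mistakes on } S'\},\\
c_{low} &= \CC(h^*), \quad y = h^*(x),\\
c_{high} &= \min\{\CC(h) : h \text{ makes at most } b \text{ mistakes on } S',\ h(x)\neq y\},
\end{align*}
with $\CC$ computed using the conventions of Definition~\ref{def:opterrr} (using $x$ if $\CC$ is test-dependent, and minimizing over compatible original sets $S$ with $S'\in\mcA_b(S)$ if $\CC$ is training-dependent). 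Conditions (a) and (b) of Definition~\ref{definitionRRR} hold by construction, so $\CL$ is a valid RRR learner; and since the output triple does not depend on $c$, it certifies membership in $\rhat_{\CL}(S',b,c)$ for every $c\in[c_{low},c_{high})$ at once.

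It remains to check $\optrhat(S',b,c) \subseteq \rhat_{\CL}(S',b,c)$. If $c<c_{low}$ then no classifier of complexity $\leq c$ makes $\leq b$ mistakes, so $\optrhat(S',b,c)$ is undefined. If $c\geq c_{high}$, the minimizer witnessing $c_{high}$ lies in the set defining $\optrhat(S',b,c)$ and disagrees with $h^*$ at $x$, so $x$ is excluded from the agreement region. Hence $x\in\optrhat(S',b,c)$ forces $c_{low}\leq c<c_{high}$, which is exactly the condition defining $\rhat_{\CL}(S',b,c)$. The main obstacle is purely notational: for training-data-dependent complexities, both $\CC$ values above must be interpreted as minima over compatible original sets $S$ to match Definition~\ref{def:opterrr}, and in condition (b) of Definition~\ref{definitionRRR} the quantifier over $S$ must be read as ``for any'' so that a witness with low complexity against some $S$ suffices to rule $h'$ in or out; once this convention is adopted consistently across the two definitions, the two inclusions combine to give the theorem.
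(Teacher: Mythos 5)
Your proof is correct and follows essentially the same strategy as the paper: the first inclusion unpacks parts (a) and (b) of Definition~\ref{definitionRRR} (the paper phrases this contrapositively, you argue the direct implication, and you also handle the "$\optrhat$ undefined" edge case a bit more explicitly), and the second inclusion uses exactly the paper's minimum-complexity construction from Algorithm~\ref{alg-generic-data-indep}. The remark about the output triple being $c$-independent, so the same learner certifies all $c$ at once, and the note on the quantifier convention for data-dependent complexity in condition (b), both correctly track points the paper handles in its own treatment of the data-dependent case.
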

\begin{proof}
    First, consider any $x \not\in \optrhat(S',b,c)$. This means there exist $h_0$ and $h_1$ of complexity at most $c$, each making at most $b$ mistakes on $S'$, such that $h_0(x)\neq h_1(x)$. In particular, this implies that for any label $y$, there exists a classifier $h'$ of complexity at most $c$ with at most $b$ mistakes on $S'$ such that $h'(x)\neq y$.  
    (For data-dependent complexity measures, $h'$ has complexity $c$ with respect to some possible original training set $S$.)
    So, for any RRR learner $\CL'$, by part (b) of Definition \ref{definitionRRR}, $\CL'$ cannot output $c_{high}>c$, and therefore $x \not\in \rhat_{\CL'}(S',b,c)$. This establishes that $\rhat_{\CL'}(S',b,c) \subseteq \optrhat(S',b,c)$.
    
    For the second part of the theorem, let us first consider complexity measures that are not data dependent. In that case, consider the learner $\CL$ given in Algorithm \ref{alg-generic-data-indep} that given $S'$ finds the classifier $h_{S'}$ of minimum complexity that makes at most $b$ mistakes on $S'$ and then uses it on test point $x$. Specifically, it outputs $(y, c_{low}, c_{high})$ where $y=h_{S'}(x)$, $c_{low} = \CC(h_{S'})$, and $$c_{high} = \min\{\CC(h): h \mbox{ makes at most } b \mbox{ mistakes on }S'  \mbox{ and } h(x)\neq h_{S'}(x)\}.$$   By construction, $\CL$ is a RRR learner.  Now, if $x \in \optrhat(S',b,c)$ then this learner $\CL$ will output $(y, c_{low}, c_{high})$ such that $c_{low} \leq c$ and $c_{high}>c$. That is because $x$ is in the agreement region of classifiers of complexity at most $c$ that make at most $b$ mistakes on $S'$, which means that any classifier making at most $b$ mistakes on $S'$ that outputs a label different than $y$ on $x$ must have complexity strictly larger than $c$.  So, $x\in \rhat_{\CL}(S',b,c)$.  This establishes that $\rhat_{\CL}(S',b,c) \supseteq \optrhat(S',b,c)$, which together with the first part implies that $\rhat_{\CL}(S',b,c) = \optrhat(S',b,c)$

If the complexity measure is data dependent, the learner $\CL$ instead works as follows.  Given $S'$, $\CL$ simply stores $S'$ producing $\CL_{S'}$.  Then, given $x$ and $b$, $\CL_{S'}(x,b)$ computes
\begin{eqnarray*}
y&=&h_{S'}(x) \mbox{ where } h_{S'}={\rm argmin}_h \{\CC(h,S',b,x): h \mbox { makes at most } b \mbox{ mistakes on } S'\},\\
c_{low} & = & \CC(h_{S'},S',b,x), \mbox{ and}\\
c_{high} & = & \min \{\CC(h,S',b,x): h \mbox { makes at most } b \mbox{ mistakes on } S' \mbox{ and } h(x)\neq h_{S'}(x)\},
\end{eqnarray*}
where here we define $\CC(h,S',b, x)$ as the minimum complexity of $h$ over all possible true training sets $S$, that is, sets $S$ consistent with $h$ such that $S'\in \mcA_b(S)$.
Again, by design, $\CL$ is a RRR learner, and if $x \in \optrhat(S',b,c)$ then it outputs $(y, c_{low}, c_{high})$ such that $c_{low} \leq c$ and $c_{high}>c$.
\end{proof}

Definition \ref{def:opterrr} and Theorem \ref{thm:empiricalopt} gave guarantees in terms of the observed sample $S'$.  We now consider guarantees in terms of the {\em original} clean dataset $S$, defining the set of points that the learner will be able to correctly classify and provide meaningful confidence values {\em no matter how} an adversary corrupts $S$ with up to $b$ poisoned points. For simplicity and to keep the definitions clean, we assume for the remaining portion of this section that $\CC$ is {\em non-data-dependent}.  

\begin{definition}[Regularized Robustly Reliable Region]
Given a complexity measure $\CC$, a sample $S$ labeled by some target function $f^*$ with $\CC(f^*) =c$, and a poisoning budget $b$, the regularized robustly reliable region $R^4_{\CL}(S,b,c)$ for learner $\CL$ is the set of points $x\in \mathcal{X}$ such that for all $S' \in \mathcal{A}_{b}(S)$ we have $\CL_{S'} (x,b) = (y ,c_{\text{low}}, c_{\text{high}} )$ with $c_{\text{low}} \leq c<c_{\text{high}}$.
    \label{definitionR4attmpt2}
\end{definition}

\begin{remark}
  $R^4_{\CL}(S,b,c)  = \bigcap_{S' \in \mathcal{A}_b(S)} \rhat_{\CL}(S',b,c) $.
\end{remark}

\begin{definition}[Optimal Regularized Robustly Reliable Region]
    Given a complexity measure $\CC$, a dataset $S$ labeled by some target function $f^*$, with $\CC(f^*) =c$, and a poisoning budget $b$, the {\em optimal regularized robustly reliable region} $\optr(S, b, c)$ is the agreement region of the set of functions of complexity at most $c$ that make at most $b$ mistakes on $S$.  {If there are no such functions, then $\optr(S,b,c)$ is undefined.}
\end{definition}

\begin{theorem}
For any RRR learner $\CL'$, we have 
$R^4_{\CL'}(S,b,\CC(f^*)) \subseteq \text{OPTR}^4(S, b, \mathcal{C}(f^*))$.  Moreover, there exists an RRR learner $\CL$ such that for any dataset $S$ labeled by (unknown) target function $f^*$, we have $R^4_{\CL}(S,b,\CC(f^*)) = \optr(S,b,\CC(f^*))$.
    \label{rrrlearner17}
\end{theorem}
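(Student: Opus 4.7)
The plan is to mirror the two-part structure of \Cref{thm:empiricalopt}, leveraging the single key fact that $\mcA_b(S)$ only \emph{adds} points: since $S \subseteq S'$ for every $S' \in \mcA_b(S)$, any classifier that makes at most $b$ mistakes on such an $S'$ automatically makes at most $b$ mistakes on $S$. This observation transfers the ``agreement family'' membership from $S'$ to $S$ in one direction for free, while the opposite direction will require an explicit adversarial construction.

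For the containment $R^4_{\CL'}(S,b,\CC(f^*)) \subseteq \optr(S,b,\CC(f^*))$, I would fix $c = \CC(f^*)$ and any $x \notin \optr(S,b,c)$, take the two witnesses $h_0, h_1$ of complexity $\le c$ with $\le b$ mistakes on $S$ that disagree at $x$, and note that $f^*$ itself lies in the defining family of $\optr$, so at least one witness, call it $h$, has $h(x)\neq f^*(x)$. The crux is to \emph{exhibit} a single bad $S' \in \mcA_b(S)$ that simultaneously certifies $f^*$ and $h$: letting $P$ be the (at most $b$) feature vectors of $S$ on which $h$ disagrees with $f^*$, I would set $S' = S \cup \{(p, h(p)) : p \in P\}$. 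Then $f^*$ makes mistakes only on the newly added points and $h$ only on the original copies of $P$, so both make at most $b$ mistakes on $S'$. Whichever label $y$ the learner $\CL'$ outputs at $(x,b)$, one of $f^*$ or $h$ is a complexity-$\le c$ witness disagreeing with $y$, so part (b) of \Cref{definitionRRR} forces $c_{\text{high}} \le c$ on this $S'$, and hence $x \notin R^4_{\CL'}(S,b,c)$.

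For the matching existence direction I would take $\CL$ to be the generic learner of \Cref{alg-generic-data-indep}, fix $x \in \optr(S,b,c)$ and an arbitrary $S'\in \mcA_b(S)$, and verify both inequalities. Here $c_{\text{low}} \le c$ is immediate because $f^*$ has complexity $c$ and makes $\le b$ mistakes on $S'$, so the minimizer $h_{S'}$ has complexity at most $c$. For $c_{\text{high}} > c$ I would argue by contradiction: a witness $h$ of complexity $\le c$ making $\le b$ mistakes on $S'$ and disagreeing with $h_{S'}(x)$ would, by $S \subseteq S'$, also make $\le b$ mistakes on $S$; together with $h_{S'}$ (which by the same reasoning also lies in the defining family of $\optr(S,b,c)$), this would exhibit two classifiers in that family disagreeing at $x$, contradicting $x \in \optr(S,b,c)$. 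The one nontrivial obstacle is the adversarial construction in the first part; the ``duplicate the disagreement points with $h$'s labels'' trick works precisely because $\mcA_b$ is additive, and I do not see a cleaner way to avoid writing down such an explicit $S'$.
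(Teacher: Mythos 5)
Your proof is correct, and the second direction matches the paper's argument essentially verbatim, but the adversarial construction in the first direction is unnecessary and conceals a simpler point. The witnesses $h_0, h_1$ certifying $x \notin \optr(S,b,c)$ already make at most $b$ mistakes on $S$ itself --- that is precisely the membership condition for the family whose agreement region defines $\optr(S,b,c)$. So the plainest choice $S' = S$ (the adversary adds nothing) already has $h$ (whichever witness disagrees with $f^*$ at $x$) and $f^*$ both of complexity at most $c = \CC(f^*)$ and both making at most $b$ mistakes on $S'$; since one of them must disagree with whatever label $y$ the learner outputs, part (b) of Definition~\ref{definitionRRR} forces $c_{\text{high}} \le c$, and $x \notin R^4_{\CL'}(S,b,c)$. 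This is exactly the route the paper takes, and it is also what underlies the paper's remark following the theorem that the adversary's optimal strategy is to add no points: adding points only shrinks the set of low-complexity hypotheses with at most $b$ mistakes, which only helps the learner. Your duplicate-the-disagreement-points construction $S' = S \cup \{(p,h(p)) : p \in P\}$ does give a valid bad $S'$, but it re-establishes with extra work something that is already true for $S' = S$; you appear to have overlooked that the defining family of $\optr$ allows $b$ mistakes on $S$ rather than requiring consistency. The ``cleaner way'' you said you did not see is simply to set $S' = S$.
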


\begin{proof}
For the first direction, consider $x \notin \text{OPTR}^4(S, b, \mathcal{C}(f^*))$. By definition, there is some $h$ with $\mathcal{C}(h) \leq \mathcal{C}(f^*)$ that makes at most $b$ mistakes on $S$ and has $h(x) \neq f^*(x)$. Now, consider an adversary that adds no poisoned points, so that $S' = S$. In this case, such $h$ makes at most $b$ mistakes on $S'$, as well. Hence, by definition, $c_{\text{high}} \leq \mathcal{C}(f^*)$ and so $x \notin R^4_{\CL}(S,b,c)$.  Hence, $R^4_{\CL}(S,b,c) \subseteq \text{OPTR}^4(S, \mathcal{C}(f^*), b)$.

For the second direction, 
consider a learner $\mathcal{L}$ training set $S'$, finds the classifier $h_{S'}$ of minimum complexity that makes at most $b$ mistakes on $S'$ and then uses it on test point $x$. Specifically, it outputs $(y, c_{\text{low}}, c_{\text{high}})$ where $y = h_{S'}(x)$, $c_{\text{low}} = \mathcal{C}(h_{S'})$, and $c_{\text{high}} = \min \{\mathcal{C}(h) : h \text{ makes at most } b \text{ mistakes on } S'$ and $ h(x) \neq h_{S'}(x)\}$. By construction, $\mathcal{L}$ satisfies Definition \ref{definitionRRR} and so is a RRR learner. Now, suppose indeed $S' \in \mathcal{A}_b(S)$ for a true set $S$ labeled by target function $f^*$. Then $f^*$ makes at most $b$ mistakes on $S'$, so $\mathcal{L}$ will output $c_{\text{low}} \leq \mathcal{C}(f^*)$. Moreover, if $x \in \text{OPTR}^4(S, f^*, b)$, then any classifier $h$ with $h(x) \neq f^*(x)$ either has complexity strictly greater than $f^*$ or makes more than $b$ mistakes on $S$ (and therefore more than $b$ mistakes on $S'$). Therefore, $\mathcal{L}$ will output $c_{\text{high}} > \mathcal{C}(f^*)$ and have $y = f^*(x)$. So, $x \in R^4_{\CL}(S,b,\CC(f^*))$. Therefore, $\optr(S,b,\CC(f^*))  \subseteq  R^4_{\CL}(S,b,\CC(f^*))$.
\end{proof}
\begin{remark}
    Notice that the adversary's optimal strategy is to add no points. That is because the learner needs to consider all classifiers of a given complexity that make at most $b$ mistakes on its training set, and adding new points can only make this set of classifiers smaller. 
\end{remark}
\section{Regularized Robustly Reliable Learners with Efficient Algorithms \label{section4}}
In this section, we present efficient algorithms for implementing regularized robustly reliable learners with optimal values of $c_{low}$ and $c_{high}$ for a variety of complexity measures.  We present additional examples in the Appendix. 

\subsection{Number of Alternations}
We first consider the Number of Alternations complexity measure for data in $\mathbb{R}^1$, and also analyze the sample-complexity for having a large regularized robustly reliable region. 
\begin{definition}[{Number of Alterations}]
The number of alterations of a function \( f: \mathbb{R} \to \{-1, +1\} \) is the number of times the function's output changes between +1 and -1 as the input variable increases from negative to positive infinity.  
\label{NumberofAlterationss}
\end{definition}
 Number of Alterations is a data-independent measure. A higher number of alterations implies a more intricate decision boundary, as the classifier switches between classes more frequently. For instance, if $f$ is the sign of a degree $d$ polynomial, then it can have at most $d$ alternations.

\begin{example}[Number of Alterations]
Consider the dataset in Figure \ref{fig:fig1}. Assuming there is no adversary, it is impossible to classify these points with any function that has less than 7 alterations. 
Suppose we now receive the test point shown in Figure \ref{figuretest}.  Given a corruption budget $b$, the learner will output a predicted label and interval $(c_{low},c_{high})$ as shown in Table \ref{tab:br_values11}.
    \begin{figure}
	{\centering
		\includegraphics[width=0.4\columnwidth]{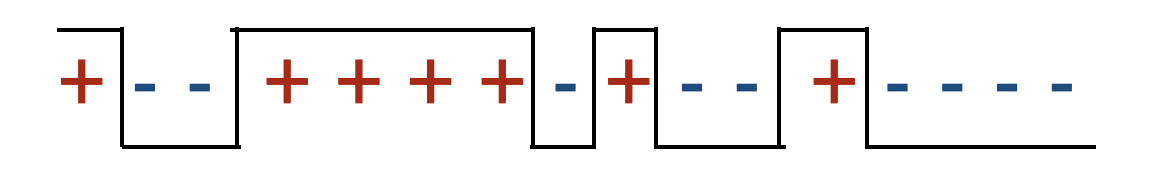} 
		\caption{Number of Alterations with $\mathbb{R} \to \{+,-\}$  Data}\label{fig:fig1}}  
\end{figure}

\begin{figure} 
	{\centering
		\includegraphics[width=0.4\columnwidth]{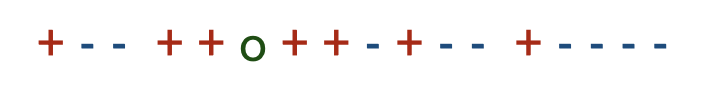} 
		\caption{Test point arrives \label{figuretest}}}
  
\end{figure}

\begin{table}
\centering
\begin{tabular}{|c|c|c|c|}
\hline
 Mistake Budget & Label & $(c_{\text{low}},c_{\text{high}} )$    \\ 
\hline
$b = 0$ & $+$ &   $[7,9)$     \\             
\hline
$b = 1$ & $+$ &   $[5,7)$     \\ 
\hline
$b = 2$ & $+$ &   $[3,5)$     \\ 
\hline
$b =3$ & $+$ &   $[2,4)$     \\ 
\hline
$b =4$ & $+$ &   $[1,3)$     \\ 
\hline
$b =5$ &  $+$ &   $[1,2)$     \\ 
\hline
$b =6$ & Any &   $\{1\}$    \\ 
\hline
$b =7,8$ & $-$ &   $[0,1)$    \\ 
\hline
$b =9,10,11,12,13,14,15,16$ & Any &   $\{1\}$    \\ 
\hline
\end{tabular} 
\caption{Guarantee for the test point in Figure \ref{figuretest} and the complexity measure Number of Alterations.\label{tab:br_values11}}
\end{table}
\end{example}

{\begin{definition}[Optimal Regularized Robustly Reliable Learner]
     We say a regularized robustly-reliable learner $\mathcal{L}$ is \emph{optimal} if it outputs values $c_{low}$ and $c_{high}$ that are respectively the lowest and highest possible values satisfying Definition \ref{definitionRRR}.
     \label{ER3}
\end{definition}}

\begin{theorem}
For binary classification, an optimal regularized-robustly-reliable learner (Definition \ref{ER3}) can be implemented efficiently for complexity measure Number of Alterations (Definition \ref{NumberofAlterationss}).
\label{theoremNAB}
\end{theorem}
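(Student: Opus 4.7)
The plan is to implement the generic RRR learner from Algorithm \ref{alg-generic-data-indep} efficiently by reducing the computation of $c_{low}$ and $c_{high}$ to a pair of dynamic programming sweeps on the sorted training set. After sorting the $n$ points in $S'$, any function $f:\mathbb{R}\to\{-1,+1\}$ with $k$ alternations partitions $\mathbb{R}$ into $k+1$ sign-alternating intervals, so its number of alternations is determined by the sequence of signs it assigns to $x_1<\dots<x_n$ (together with the sign at $x_{test}$), since we are always free to place a decision boundary between adjacent training points at no extra cost. This reduces the problem to a combinatorial optimization over sign sequences for a linear array of points.

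First I would compute a forward DP table $L[i][j][s]$: the minimum number of alternations in a sign sequence $(\sigma_1,\dots,\sigma_i)$ with $\sigma_i=s$ and at most $j$ disagreements with $y_1,\dots,y_i$. The transitions
\[
L[i][j][s]=\min\bigl(L[i-1][j-\Ind[s\ne y_i]][s],\ L[i-1][j-\Ind[s\ne y_i]][-s]+1\bigr),
\]
give an $O(nb)$-time computation, and $\min_s L[n][b][s]$ already yields a minimum-alternation classifier on all of $S'$, i.e.\ $h_{S'}$ and $c_{low}$ in the non-test-dependent sense. Symmetrically I would build a backward DP $R[i][j][s]$ by sweeping right-to-left from $x_n$.

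To answer a query at a test point $x_{test}$, locate its position by binary search so that $x_p<x_{test}<x_{p+1}$. Any classifier that labels $x_{test}$ with $y\in\{-1,+1\}$ and commits at most $b$ training mistakes decomposes into a left sign sequence ending at some sign $s_L$ using $j_L$ mistakes, a right sign sequence starting at some $s_R$ using $j_R$ mistakes, plus the boundary alternations needed around $x_{test}$ to realize $y$. Its total alternation count equals
\[
L[p][j_L][s_L]+R[p+1][j_R][s_R]+\Ind[s_L\ne y]+\Ind[s_R\ne y].
\]
Minimizing over $s_L,s_R\in\{-1,+1\}$ and over $j_L+j_R\le b$ yields $m_y(b)$, the minimum alternations among at-most-$b$-mistake classifiers predicting $y$. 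Then $c_{low}=\min\{m_{+1}(b),m_{-1}(b)\}$, the predicted label $y^\ast$ is the minimizer, and $c_{high}=m_{-y^\ast}(b)$; if the two values coincide the learner abstains, matching the ``any label'' rows in Table \ref{tab:br_values11}.

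Correctness follows from two pieces: the DP values equal the true minimum alternation counts over sign assignments (standard, using the boundary-placement observation above), and optimality of the resulting $(y,c_{low},c_{high})$ against Definition \ref{definitionRRR} is inherited from Theorem \ref{thm:empiricalopt} applied to Algorithm \ref{alg-generic-data-indep}. The only real bookkeeping subtlety I expect is handling the up-to-two boundary alternations at $x_{test}$ correctly, together with observing that $L[i][j][s]$ and $R[i][j][s]$ are nonincreasing in $j$ so the combination step may fix $j_L+j_R=b$. The resulting algorithm uses $O(n\log n + nb)$ preprocessing and $O(b+\log n)$ time per test query, which certifies the efficiency claim in Theorem \ref{theoremNAB}.
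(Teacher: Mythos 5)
Your proof is correct and takes essentially the same route as the paper: a forward and backward dynamic program indexed by position, mistake count, and a two-valued sign state, combined at query time by splitting the mistake budget across the two sides of $x_{\text{test}}$ and adding $0$, $1$, or $2$ boundary alternations. The only cosmetic difference is your choice of DP state (the sign assigned to $x_i$ rather than the sign of the rightmost non-removed point, as in the paper's $DP_\pm$ tables), which are interchangeable since a removed point can always be assigned its neighbor's sign at no cost; the paper's four ``formations'' in Figure~\ref{fig:functionformations} are exactly your four choices of $(s_L,s_R)$.
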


\begin{proof}[Proof sketch]
    The high-level idea is to perform bi-directional Dynamic Programming on the training data.  A left-to-right DP computes, for each point $i$ and each $j\leq b$, the minimum-complexity solution that makes $j$ mistakes up to that point (that is, on points $0, 1, ..., i$) and labels $i$ as positive, as well as the minimum-complexity solution that makes $j$ mistakes so far and labels $i$ as negative.   A right-to-left DP does the same but in right-to-left order.   Then, when a test point $x$ arrives, we can use the DP tables to compute the values $y, c_{low}, c_{high}$ in time $O(b)$, without needing to re-train on the training data.  In particular, we just need to consider all ways of partitioning the mistake-budget $b$ into $j$ mistakes on the left and $b-j$ mistakes on the right, and then using the DP tables to select the best choice. The full proof is given in Appendix \ref{proofofalter}.
\end{proof}

\begin{remark}
If instead of computing  $y, c_{low}, c_{high}$ for a single value of $b$ we wish to compute them for all $b\in[0,b_{max}]$, the straightforward approach would take time $O(b_{max}^2)$.  However, we can also use an algorithm of \cite{chi2022faster} for computing the \((\min, +)\)-convolution of monotone sequences to compute the entire set in time $\tilde{O}((b_{max}+c_{max})^{1.5})$, where $c_{max}$ is the largest value in the DP tables 
(See Theorem \ref{minsumconvreduction} in the Appendix).
\end{remark}
\noindent
We now analyze the sample complexity for having a large regularlized robustly-reliable region for this complexity measure when data is iid.
\begin{theorem}
Suppose the Number of Alterations (Definition \ref{NumberofAlterationss}) of the target function is \( c \). For any \( \epsilon, \delta \in (0,1) \), and any mistake budget \( b \), if the size of the (clean) sample \( S \sim \mathcal{D}^m \) is at least \( \tilde{O}\left(\frac{(b+1)c}{\epsilon}\right) \), and as long as there is at least $\frac{\epsilon}{2c}$ probability mass to the left and right of each alternation of the target function,
with probability at least $1-\delta$, the \textit{optimal regularized robustly reliable region}, \( \optr(S, c, b) \), contains at least a \( 1-\epsilon \) probability mass of the distribution. 
\label{NASC}
\end{theorem}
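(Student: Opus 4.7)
The plan is to bound the probability mass of the bad set $B := \mathcal X\setminus\optr(S,b,c)$ by combining a structural lemma about disagreement regions with standard uniform convergence on intervals. Let $c_0=\CC(f^*)=c$, and let $a^*_1<\cdots<a^*_{c_0}$ be the alternation points of $f^*$, which partition $\mathbb R$ into constancy regions $J_0,\dots,J_{c_0}$ (with the conventions $a^*_0:=-\infty$ and $a^*_{c_0+1}:=+\infty$ to handle unbounded regions uniformly). By hypothesis each $J_j$ carries $\mathcal D$-mass at least $\epsilon/(2c)$. I will condition on two events: (A) every $J_j$ contains at least $b+1$ samples of $S$, and (B) every interval $I\subset\mathbb R$ with $|S\cap I|\le b$ satisfies $\Pr_{\mathcal D}(I)\le \epsilon/(2c+2)$. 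Event (A) holds with probability $\ge 1-\delta/2$ by Chernoff plus a union bound over the $c_0+1$ regions of $f^*$, and event (B) is the standard one-sided uniform-convergence bound for the VC-dimension-$2$ class of intervals in the small-empirical-count regime; both require $m=\tilde O((b+1)c/\epsilon)$ samples.

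The main step is the following structural claim: under event (A), for every $h$ with $\CC(h)\le c$ and $|D_h\cap S|\le b$ (where $D_h=\{x:h(x)\neq f^*(x)\}$), every connected component of $D_h$ has exactly one endpoint at an alternation $a^*_j$ of $f^*$ and is contained inside the adjacent region $J_j$. To prove it, let $A$ and $A^*$ be the alternation sets of $h$ and $f^*$. A direct check shows that inside the interior of any component of $D_h$ the point sets $A$ and $A^*$ coincide (otherwise $h$ and $f^*$ would stop being opposites), and the same holds in the interior of the agreement region. Hence every endpoint of every component of $D_h$ is either a \emph{case-a} boundary in $A^*\setminus A$ (an alternation of $f^*$ that $h$ skips) or a \emph{case-b} boundary in $A\setminus A^*$ (a new alternation of $h$). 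Letting $m_a=|A^*\setminus A|$ and $m_b=|A\setminus A^*|$, the budget $|A|\le|A^*|$ gives $m_b\le m_a$. A component with two case-a endpoints $(a^*_i,a^*_k)$ would fully contain some $J_m$, which by (A) has more than $b$ samples, contradicting $|D_h\cap S|\le b$. Writing $k_j$ for the number of components with $j$ case-a endpoints, this shows $k_2=0$; combined with the identities $m_a=k_1$ and $m_b=2k_0+k_1$ together with $m_b\le m_a$, we also get $k_0=0$. So every component has exactly one case-a and one case-b endpoint. Applying the full-$J$ argument one more time to a mixed component $(a^*_j,q)$ with $q\in J_k$ for some $k>j$ forces $k=j$, so the component is contained in $J_j$.

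For each $j\in\{0,\dots,c_0\}$, define $L_j\subset J_j$ to be the leftmost sub-interval of $J_j$ containing exactly $b$ samples of $S$, and define $R_j$ symmetrically from the right; event (A) ensures these are well-defined (even for the unbounded $J_0$ and $J_{c_0}$). By the structural claim, every disagreement component of every $h$ with $\CC(h)\le c$ and $|D_h\cap S|\le b$ is contained in some $L_j$ or $R_j$, and therefore $B\subseteq\bigcup_{j=0}^{c_0}(L_j\cup R_j)$. Each $L_j,R_j$ is an interval with at most $b$ samples, so event (B) bounds its $\mathcal D$-mass by $\epsilon/(2c+2)$; summing over the $2(c_0+1)\le 2(c+1)$ intervals gives $\Pr_{\mathcal D}(B)\le \epsilon$, which is the claimed lower bound $\Pr_{\mathcal D}(\optr(S,b,c))\ge 1-\epsilon$.

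The main obstacle is the structural claim, specifically the elimination of $(0,2)$-type components whose both endpoints are new alternations of $h$ in the interior of some $J_j$. Such components are perfectly consistent with $|D_h\cap S|\le b$ in isolation — one can simply flip $h$ on a short interior interval with few samples — but making them compatible with the complexity budget $\CC(h)\le c$ forces $h$ to skip two alternations of $f^*$ elsewhere, and each skip creates a disagreement stretch that under (A) must contain a full $J_m$ and hence more than $b$ samples. The tight interplay between the complexity constraint $m_a\ge m_b$ and the mass-lower-bound assumption on each $J_j$ is precisely what collapses the a priori uncontrolled union $\bigcup_h D_h$ into only $2(c_0+1)$ anchored intervals of tiny mass, and what makes the final union bound match the sample-complexity target of $\tilde O((b+1)c/\epsilon)$.
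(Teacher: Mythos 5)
Your proof is correct in its essentials but takes a genuinely different route from the paper. The paper's argument is simpler and more direct: it pre-designates $2c$ \emph{fixed} intervals $I_1,\dots,I_{2c}$ of mass $\epsilon/(2c)$ each, flanking the alternations of $f^*$, shows via Chernoff that each receives at least $b{+}1$ samples, and then argues in one stroke that any $h$ with $\CC(h)\le c$ and at most $b$ mistakes is forced to place exactly one alternation in each pair $I_{2j-1}\cup I_{2j}$ and hence agrees with $f^*$ everywhere outside $\bigcup_j I_j$; the bad region is a fixed set of mass $\epsilon$ by construction. You instead use \emph{data-dependent} intervals $L_j,R_j$ (defined by the first $b$ samples of each constancy region $J_j$ from either side) and control their mass by a one-sided uniform-convergence bound over the VC class of intervals, which requires a more elaborate structural lemma --- a component-by-component analysis of $D_h$ classifying endpoints into case-a/case-b and closing with a counting argument forcing $k_0=k_2=0$. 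Your characterization $B=\bigcup_j(L_j\cup R_j)$ is in fact (up to measure-zero boundary) an \emph{exact} description of the bad region, whereas the paper only gives a containment $B\subseteq\bigcup_j I_j$; that extra precision is elegant but does not improve the final sample-complexity bound, which is $\tilde O\lr{(b{+}1)c/\epsilon}$ either way.

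Two details in your structural lemma deserve more care. First, your identity $m_b=2k_0+k_1$ silently assumes every component has two finite endpoints; a component unbounded on one side contributes only one endpoint, so the correct accounting separates bounded and unbounded components. The conclusion still holds (unbounded components with a case-a endpoint swallow an entire $J_m$ and are ruled out by (A), after which the inequality still forces $k_0$ and the unbounded-type-$(0,1)$ count to vanish), but the displayed identity as written is not literally true, and the passing remark ``$a^*_0:=-\infty$, $a^*_{c_0+1}:=+\infty$'' does not by itself resolve the asymmetry since $\pm\infty$ sit in $A\cap A^*$ rather than in a symmetric difference. Second, your $L_j$ should be the maximal left-anchored subinterval of $J_j$ with \emph{at most} $b$ samples (i.e.\ extending up to the $(b{+}1)$-st sample), not the subinterval containing exactly $b$ samples, since a disagreement component anchored at $a^*_j$ with at most $b$ samples can reach up to, but not including, that $(b{+}1)$-st sample point; the interval still has $\le b$ samples so event (B) applies unchanged. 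Neither issue affects the validity of the overall argument.
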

\begin{proof}[Proof sketch]
    Consider $2c$ intervals $I_1, I_2, ..., I_{2c}$, each of probability mass $\frac{\epsilon}{2c}$ to the left and right of each alternation.  Without loss of generality, assume $I_1$ is positive, $I_2$ and $I_3$ are negative, $I_4$ and $I_5$ are positive, etc., according to the target function $f^*$.  A sample size of $\tilde{O}( \frac{(b+1)c}{\epsilon})$ is sufficient so that with high probability, $S$ contains at least $b+1$ points in each of these intervals $I_j$.  Assuming $S$ indeed contains such points, then any classifier that does not label at least one point in each interval correctly must have error strictly larger than $b$.  This in turn implies that any classifier $h$ with $b$ or fewer mistakes on $S$ must have an alternation from positive to negative within $I_1 \cup I_2$, an alternation from negative to positive within $I_3 \cup I_4$, etc.  Therefore, if $h$ has complexity $c$, it {\em cannot} have any alternations outside of $\bigcup_j I_j$ and indeed must label all of $\mathbb{R}-\bigcup_j I_j$ in the same way as $f^*$.
\end{proof}
The full proof is given in Appendix \ref{samplecomplexity}.

\subsection{Local Margin}
\label{sec:localmargin}
We now study a \emph{test-data-dependent} measure.
\begin{definition}[Local Margin] Given a metric space \((\mathcal{M}, d_{\mathcal{M}})\), for a classifier with a decision function \(h: \mathcal{X} \to \mathcal{Y}\), where \(\mathcal{X}\) is the input space and \(\mathcal{Y}\) is the output space, the local margin of the classifier with respect to a point \(x^* \in \mathcal{X}\) is the distance between \(x^*\) and the nearest point $x' \in \mcX$ such that $h(x')\neq h(x^*)$.
\[
r(h,x^*) =  \inf_{\{x'\in \mcX : h(x')\neq h(x^*)\}} d(x^*, x')
\]
We define the local margin complexity measure $\CC(h,x^*)$ as $1/r(h,x^*)$.
\label{localmarginrevised}
\end{definition}
The Local Margin is a test-data-dependent measure, and a larger local margin implies that the given point is well separated from the decision boundary. Note that for this complexity measure, we have the convenient property that for any training set $S'$, test point $x_{test}$, label $y$, and mistake budget $b$, the minimum complexity $c_{low,y}$ of a classifier $h$ that makes at most $b$ mistakes on $S'$ and gives $x_{test}$ a label of $y$
is given by $1/r$ where $r$ is the distance between $x_{test}$ and the $(b+1)$st closest example in $S'$ of label different from $y$.  In particular, $r$ cannot be larger than this value since at least one of these $b+1$ points must be correctly labeled by $h$ and therefore it is a legitimate choice for $x'$ in Definition \ref{localmarginrevised}.  Moreover, it is realized by the classifier that labels the open ball around $x_{test}$ of radius $r$ as $y$, and then outside of this ball is consistent with the labels of $S'$.
This allows us to show:
 
\begin{theorem}
For any multi-class classification task, an optimal regularized robustly reliable learner (Definition \ref{ER3}) can be implemented efficiently for complexity measure Local Margin (Definition \ref{localmarginrevised}). 
\label{lmtheorem}
\end{theorem}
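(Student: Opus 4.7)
The plan is to exploit the clean characterization stated just before the theorem. For each label $y \in \mathcal{Y}$, define $r_y = r_y(S', x_{\mathrm{test}}, b)$ to be the distance from $x_{\mathrm{test}}$ to the $(b+1)$st nearest point of $S'$ whose label differs from $y$, and let $c_{\mathrm{low},y} := 1/r_y$. The text preceding the theorem asserts (and I would verify both directions carefully) that $c_{\mathrm{low},y}$ is exactly the infimum of $\CC(h, x_{\mathrm{test}})$ over classifiers $h$ satisfying $h(x_{\mathrm{test}}) = y$ and having at most $b$ mistakes on $S'$. Granting this, the learner's choice is forced: predict the label minimizing $c_{\mathrm{low},y}$, and use the runner-up as $c_{\mathrm{high}}$.

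The algorithm is therefore the following. On input $(x_{\mathrm{test}}, b)$, iterate over each $y \in \mathcal{Y}$; for each such $y$, compute $r_y$ by selecting the $(b+1)$st smallest element in the list of distances from $x_{\mathrm{test}}$ to those points of $S'$ with label $\neq y$, which takes $O(|S'|)$ time using a linear-time selection algorithm. Then let $y^* \in \arg\min_{y \in \mathcal{Y}} c_{\mathrm{low},y}$ and output $(y^*, c_{\mathrm{low}}, c_{\mathrm{high}})$ with $c_{\mathrm{low}} = c_{\mathrm{low},y^*}$ and $c_{\mathrm{high}} = \min_{y \neq y^*} c_{\mathrm{low},y}$. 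The total per-query running time is $O(|\mathcal{Y}| \cdot |S'|)$; in the binary case it is just $O(|S'|)$. Note that, unlike the dynamic-programming approach for Number of Alternations, no training-time preprocessing is required here since the complexity measure is test-point-dependent.

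For correctness and optimality (Definitions \ref{definitionRRR} and \ref{ER3}), part (a) is witnessed by an explicit ``ball'' classifier: label the open ball $B$ of radius $r_{y^*}$ around $x_{\mathrm{test}}$ with $y^*$, agree with the $S'$-labels on $S' \setminus B$, and set the value on the rest of $\mcX$ to some fixed label $\neq y^*$. By the defining property of $r_{y^*}$, at most $b$ training points lie in the open ball with label $\neq y^*$, so $h$ makes at most $b$ mistakes on $S'$; moreover, the nearest point with $h$-label $\neq y^*$ lies on the sphere of radius $r_{y^*}$, so $\CC(h, x_{\mathrm{test}}) = 1/r_{y^*} = c_{\mathrm{low}}$. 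Part (b) reduces to a one-line pigeonhole: any $h'$ with $h'(x_{\mathrm{test}}) = y' \neq y^*$ and at most $b$ mistakes on $S'$ must correctly label at least one of the $b+1$ closest $S'$-points of label $\neq y'$, which forces $r(h', x_{\mathrm{test}}) \leq r_{y'}$ and hence $\CC(h', x_{\mathrm{test}}) \geq c_{\mathrm{low},y'} \geq c_{\mathrm{high}}$. The only subtlety, and the closest thing to a real obstacle, is the careful treatment of the boundary sphere of radius $r_{y^*}$ in the witness construction; once the classifier's value off $B \cup S'$ is explicitly pinned down, the two directions match and optimality in the sense of Definition \ref{ER3} follows immediately.
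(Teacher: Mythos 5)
Your proposal is correct and follows essentially the same route as the paper's proof in Appendix~\ref{localmarginproofAppendixx}: compute, for each label $y$, the distance $r_y$ to the $(b+1)$st nearest training point of a different label, take $c_{\mathrm{low},y} = 1/r_y$, predict the argmin, and use the runner-up as $c_{\mathrm{high}}$. The minor differences — using linear-time selection rather than sorting, and explicitly pinning down the witness classifier's values off $B \cup S'$ to some fixed label $\neq y^*$ so the boundary sphere argument goes through cleanly — are reasonable tightenings but not a different approach.
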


\begin{proof}[Proof sketch]
Given training data $S'$ and test point $x_{test}$, we compute the distance of all training points from  $x_{test}$.  Then, for each class label $y_i$, we compute the radius $r_i$ of the largest open ball we can draw around the test point that contains at most $b$ training points with label different from $y_i$. The complexity of the least complex classifier that labels the test point as $y_i$ is then $c_{y_i} = \frac{1}{r_i}$. We repeat this for all classes.  We then define the predicted label $y = {\text{argmin}_{y_i}} \{c_{y_i}\}$, $c_{low} = c_{y}$, and $c_{high} = \min_{y_i \neq y}\{c_{y_i}\}$.
An example and the full proof is given in Appendix \ref{localmarginproofAppendixx}.
\end{proof}

\subsection{Global Margin}
Lastly, we study a \emph{test-and-training-data-dependent} measure.
\begin{definition}[Global Margin]
Given a metric space \((\mathcal{M}, d_{\mathcal{M}})\), a set $\tilde{S} = \{ (x,y) | x \in  \mathcal{X}, y \in  \mathcal{Y}  \}$, and a classifier \(h: \mathcal{X} \to \mathcal{Y}\) that realizes $\tilde{S}$, we define the global margin of $h$ with respect to $\tilde{S}$ as
\[
r(h,\tilde{S}) =  \min_{x_i \in \tilde{S}} \inf_{\{x' \in \mcX: h(x')\neq h(x_i)\}} d(x_i, x').
\]
We define the global margin complexity measure $\CC(h,\tilde{S})$ as $1/r(h,\tilde{S})$.  
Furthermore, given a training set $S'$, test point $x_{test}$ and corruption budget $b$, we define $\CC(h,S',b,x_{test})$ as $1/r$ where $r$ is the largest value of $r(h,S \cup \{x_{test}\})$ over all $S$ such that $S' \in \mcA_b(S)$; that is, it is an ``optimistic'' value over possible original training sets $S$.
\label{globalmargin}
\end{definition}
Intuitively, Global Margin says that the most natural label for a test point $x_{test}$ is the label such that the resulting data is separable by the largest margin.  Note that in the presence of an adversary with poisoning budget $b$, the set $\tilde{S}$ in the above definition corresponds to the test point along with the training set $S'$, excluding the \( b \) points of $S'$ of smallest margin.

\begin{theorem}
On a binary classification task, an optimal regularized robustly reliable learner  (Definition \ref{ER3}) can be implemented efficiently for complexity measure Global Margin (Definition \ref{globalmargin}). 
\label{gmL}
\end{theorem}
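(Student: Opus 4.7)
The plan is to reduce the problem, for each possible label $y\in\{+,-\}$ of $x_{test}$, to a sequence of minimum vertex cover problems on bipartite graphs, which by K\"onig's theorem reduce to bipartite maximum matching and are thus solvable in polynomial time. The key structural lemma I will prove is that for any finite labeled set $\tilde S$ realized by a classifier $h$, the largest achievable global margin equals $\tfrac{1}{2}\min\{d(x_i,x_j):(x_i,y_i),(x_j,y_j)\in \tilde S,\; y_i\neq y_j\}$, and that this maximum is attained by the nearest-neighbor (Voronoi) classifier on $\tilde S$. The lower bound is immediate from the Voronoi construction, since any point of the space strictly nearer to some $x_i\in\tilde S$ than to every opposite-labeled point of $\tilde S$ shares the label of $x_i$. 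For the upper bound, if $h(x_i)\neq h(x_j)$ then every point of the open ball $B(x_i,r(h,x_i))$ is labeled $h(x_i)$ and every point of $B(x_j,r(h,x_j))$ is labeled $h(x_j)$, so the two balls must be disjoint. In geodesic spaces this forces $r(h,x_i)+r(h,x_j)\le d(x_i,x_j)$, hence $\min(r(h,x_i),r(h,x_j))\le d(x_i,x_j)/2$.

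Combining this with Definition~\ref{globalmargin}, which ``optimistically'' allows up to $b$ points of $S'$ to be discarded as adversarial, the minimum complexity of a classifier with at most $b$ mistakes on $S'$ satisfying $h(x_{test})=y$ equals $1/r_y$, where $r_y$ is the largest value such that some $R\subseteq S'$ with $|R|\le b$ makes every opposite-labeled pair of $(S'\setminus R)\cup\{(x_{test},y)\}$ lie at distance at least $2r_y$. I would compute $r_y$ by binary searching over the $O(n^2)$ distinct values $\tfrac{1}{2}d(x_i,x_j)$ ranging over opposite-labeled pairs of $S'\cup\{(x_{test},y)\}$. For a candidate threshold $r$, form the bipartite ``conflict graph'' $G_r$ whose parts are the positive and negative points of $S'\cup\{(x_{test},y)\}$ and whose edges are opposite-labeled pairs at distance strictly less than $2r$; the threshold $r$ is feasible iff $G_r$ admits a vertex cover of size at most $b$ that excludes $x_{test}$. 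This constraint is handled by first forcing every neighbor of $x_{test}$ in $G_r$ into the cover (edges incident to $x_{test}$ have no other covering vertex) and then applying K\"onig's theorem to the residual bipartite graph; each feasibility check thus reduces to a single bipartite maximum matching, yielding an overall polynomial runtime.

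Running the above procedure for both $y\in\{+,-\}$ produces $r_+$ and $r_-$, and the learner outputs $y^\star=\argmax_y r_y$ together with $c_{low}=1/r_{y^\star}$ and $c_{high}=1/r_{\bar y^\star}$, where $\bar y^\star$ is the other label. Both conditions of Definition~\ref{definitionRRR} hold by construction: $r_{y^\star}$ is the largest margin realizable by any classifier with $\le b$ mistakes on $S'$ labeling $x_{test}$ as $y^\star$, while any classifier with $\le b$ mistakes labeling $x_{test}$ as $\bar y^\star$ must have complexity at least $1/r_{\bar y^\star}$. The main obstacle is the structural lemma, and particularly verifying the upper bound cleanly in arbitrary metric spaces (in geodesic spaces the midpoint argument suffices; more generally one can restrict attention to the finite set of candidate threshold distances induced by the training and test points, on which the Voronoi classifier still attains the upper bound). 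Edge cases---$\tilde S$ having only one label (margin $+\infty$, complexity $0$) and $b$ large enough that $c_{low}\ge c_{high}$ (learner abstains)---require explicit but routine handling.
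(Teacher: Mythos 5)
Your proposal takes essentially the same approach as the paper: the same structural lemma (optimal global margin equals half the minimum opposite-label pair distance, realized by the nearest-neighbor classifier), the same reduction to minimum vertex cover on bipartite conflict graphs solved via K\"onig/maximum matching, and the same trick of handling $x_{test}$ by forcing its neighbors into the cover before computing the matching on the residual graph. The only differences are cosmetic (you binary search per label and recompute each matching from scratch rather than emphasizing the precomputed graph family and augmenting-path updates the paper uses for test-time speedup, which is an efficiency refinement beyond what the theorem statement requires), and you are in fact slightly more careful than the paper in flagging that the midpoint argument for the structural lemma's upper bound needs a geodesic-type assumption on the metric space.
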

\begin{proof}[Proof sketch]
For simplicity, suppose that instead of being given a mistake-budget $b$ and needing to compute $c_{low}$ and $c_{high}$, we are instead given a complexity $c$ with associated margin $r=1/c$ and need to compute the minimum number of mistakes to label the test point as positive or negative subject to this margin. Now, construct a graph on the training data where we connect two examples $x_i, x_j$ if their labels are different and $d(x_i,x_j) < 2r$.  Note that the minimum {\em vertex cover} in this graph gives the smallest number of examples that would need to be removed to make the data consistent with a classifier of complexity $c$.  In particular, the nearest-neighbor classifier with respect to the examples remaining (after the vertex cover has been removed) has margin at least $r$, while if a set of examples is removed that is {\em not} a vertex cover, then the margin of any consistent classifier is strictly less than $r$ by triangle inequality.
Note also that while the Minimum Vertex Cover problem in general graphs is NP-hard, it can be solved efficiently in {\em bipartite} graphs by computing a maximum matching, and our graph of interest is bipartite (since we only have edges between points of different labels and we only have two labels).  
    
Now, given our test point $x_{test}$, we can consider the effect of giving it each possible label.  If we label $x_{test}$ as positive, then we would want to solve for the minimum vertex-cover {\em subject to} that cover containing all negative examples within distance $2r$ of $x_{test}$; if we label $x_{test}$ as negative, then we would solve for the minimum vertex cover {\em subject to} it containing all positive examples within distance $2r$ of $x_{test}$.  
We can do this by re-solving the maximum matching problem from scratch in the graph in which the associated neighbors of $x_{test}$ have been removed, or we can do this more efficiently (especially when $x_{test}$ does not have many neighbors) by using dynamic algorithms for maximum matching.  Such algorithms are able to recompute a maximum matching under small changes to a given graph more quickly than doing so from scratch.  
Finally, to address the case that we are given the corruption budget $b$ rather than the complexity level $c$, we pre-compute the graphs for all relevant complexity levels and then perform binary search on $c$ at test time. The full proof is given in Appendix \ref{globalmarginproof} (Appendix \ref{sec:gmunderstanding} describes some helpful properties of global margin and \ref{sec:gmproof} contains the proof).  
\end{proof}

The above argument is specific to binary classification.  We show below that for three or more classes, achieving an optimal regularized robustly reliable learner is NP-hard.

\begin{theorem}
For multi-class classification with $k \geq 3$ classes, 
achieving an optimal regularized robustly reliable learner (Definition \ref{ER3}) for Global Margin complexity (Definition \ref{globalmargin}) is NP-hard.
     \label{nphard}     
\end{theorem}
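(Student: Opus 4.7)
The plan is to reduce from Minimum Vertex Cover on $3$-partite graphs, which is NP-hard (for instance via double subdivision: replacing every edge of an arbitrary graph $G$ by a path of length $3$ yields a $3$-partite graph $G'$ with $\VC(G') = \VC(G) + |E(G)|$, so a polynomial algorithm for $3$-partite VC would give one for general VC). Following the structural characterization developed in the proof sketch of Theorem~\ref{gmL}, computing the optimal $c_{low}$ of a Global Margin RRR learner at a fixed margin $r$ amounts to finding the smallest $T \subseteq S'$ of size at most $b$ whose removal leaves no differently-labeled pair at distance less than $2r$; equivalently, a minimum vertex cover of the \emph{conflict graph} whose edges connect differently-labeled points at distance less than $2r$. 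With two labels this conflict graph is bipartite (as in the proof of Theorem~\ref{gmL}), but with $k\geq 3$ labels it becomes $k$-partite, which is where intractability enters.

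Concretely, given a tripartite instance $G = (V_1 \cup V_2 \cup V_3, E)$ and an integer $k$, I would construct the following RRR instance. For each $v \in V_i$, create a training point labeled $i \in \{1,2,3\}$. Define the metric on the training set by $d(u,v) = 1$ when $\{u,v\} \in E$ and $d(u,v) = 2$ otherwise; since every pairwise distance lies in $\{1,2\}$ and $1 + 1 \geq 2$, the triangle inequality is trivially satisfied (and one can embed such a finite metric isometrically into Euclidean space of sufficiently high dimension if desired). Place the test point $x_{test}$ at distance $M = 10$ from every training point with an arbitrary label, and set the mistake budget to $b = k$.

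Now pick $r = 3/4$, so $2r = 3/2$. At this radius the conflict graph among training points is exactly $G$ (edges of $G$ correspond to distance $1 < 2r$, non-edges to distance $2 > 2r$), and $x_{test}$ creates no conflicts with any training point since $M > 2r$. Therefore, for every label choice $y$ of $x_{test}$, the maximum margin achievable with at most $b$ mistakes is $1$ if $\VC(G) \leq b$ (remove a minimum vertex cover so only distance-$2$ differently-labeled pairs remain) and is at most $1/2$ otherwise (some distance-$1$ edge of $G$ necessarily survives). This dichotomy forces the optimal $c_{low} = 1/r_{\max}(b)$ to lie in $\{1\} \cup [2,\infty)$, so the single test ``$c_{low} \leq 3/2$?'' decides whether $\VC(G) \leq k$. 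Any polynomial-time algorithm computing the exact $c_{low}$ demanded by Definition~\ref{ER3} would thus solve $3$-partite vertex cover in polynomial time, which establishes NP-hardness; for $k > 3$ classes the same argument applies with spare labels left unused.

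The main obstacle I anticipate is handling the ``optimistic'' aspect of Definition~\ref{globalmargin}, where $\CC(h, S', b, x_{test})$ is $1/r$ with $r$ maximized over all original training sets $S$ with $S' \in \mcA_b(S)$: one must verify that this joint optimization over $(h, S)$ collapses to simply choosing which $\leq b$ points to delete from $S'$. For our construction this follows immediately because distances take only the values $\{1, 2, M\}$, so the achievable margin depends exclusively on which differently-labeled pairs survive; a classifier can then be taken to be any consistent extension (e.g.\ nearest-neighbor) on the remaining points. A secondary bookkeeping check is that because $x_{test}$ is so far from every training point, its label has no effect on $c_{low}$ at the chosen radius, so the learner's reported $y$ is free but $c_{low}$ must still be reported exactly.
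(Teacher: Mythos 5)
Your reduction takes a genuinely different route from the paper: you reduce from vertex cover on $3$-partite graphs (via the standard double-subdivision argument), whereas the paper reduces from vertex cover on cubic graphs and invokes Brooks' theorem to obtain a $3$-coloring in polynomial time. You also bypass the paper's edge-incidence embedding by defining a finite metric directly ($d=1$ on edges, $d=2$ on non-edges). The overall structure of the dichotomy — conflict graph at scale $2r$ equals $G$, so the minimum budget needed to reach margin $\geq r$ equals $\VC(G)$ — is the right idea and matches the paper's strategy.

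However, there is a genuine gap in the metric construction. Your parenthetical claim that the $\{1,2\}$-valued metric "can be embedded isometrically into Euclidean space of sufficiently high dimension" is false. Take a vertex $v$ of degree $3$ whose three neighbors are pairwise non-adjacent (which always occurs after double subdivision): you then demand four points $a,b,c,d$ with $d(a,b)=d(a,c)=d(a,d)=1$ and $d(b,c)=d(b,d)=d(c,d)=2$. In $\ell_2$, the circumradius of an equilateral triangle of side $2$ is $2/\sqrt{3}>1$, so no such $a$ exists in any dimension. This matters because your dichotomy uses the characterization (from the paper's unnumbered theorem on global margin in Appendix B.4.1) that the optimal realizable margin equals half the minimum distance between differently-labeled surviving points — a statement that relies on the decision boundary passing ``midway,'' i.e.\ on the ambient space being geodesic/continuous. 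In an abstract finite metric with no intermediate points this factor-of-two formula does not hold, so your stated threshold values ($c_{low}=1$ vs.\ $c_{low}\geq 2$, tested at $3/2$) are not justified as written. The paper sidesteps exactly this by embedding into $\mathbb{R}^{|E|}$ with $L_1$/Hamming distance via incidence vectors — which is why it needs the source graph to be $k$-regular (so adjacent pairs land at a uniform distance $4$ and non-adjacent pairs at $6$), and hence why it starts from cubic vertex cover plus Brooks' theorem rather than an arbitrary $3$-partite instance. To repair your argument you could either (a) switch to the paper's incidence embedding, which then forces the detour through regular graphs, or (b) drop the Euclidean claim, work over the finite metric space directly, and redo the margin calculus there (the dichotomy becomes achievable margin $2$ vs.\ at most $1$, and you would need to re-verify that Definition~\ref{globalmargin} and the surrounding machinery remain meaningful when $\mathcal{X}$ contains only the finitely many sample points). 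As written, neither repair has been carried out, so the proof has a hole precisely at the point where it claims the embedding exists.
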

\begin{proof}[Proof sketch]
We reduce from the problem of Vertex Cover in $k$-regular graphs, which is NP-hard for $k\geq 3$. Given a $k$-regular graph, we first give it a $k$-coloring, which can be done in polynomial time (ignoring the trivial case of the $(k+1)$-clique).  We then embed the graph in $\mathbb{R}^m$ such that any two vertices $v_1, v_2$ that were adjacent in the given graph have distance less than $2r$, and any two vertices that were not adjacent have distance greater than $2r$, for some value $r$. The points in this embedding are given labels corresponding to their colors in the $k$-coloring, ensuring that all pairs that were connected in the input graph have different labels. This then gives us that determining the minimum value of $b$ for this radius $r$ is at least as hard as determining the size of the minimum vertex cover in the original graph.
    The full proof is given in Appendix \ref{nphardmargin}.
\end{proof}

\subsection{Other complexity measures}

In the appendix, we give regularized robustly reliable learners for other complexity measures, including interval probability mass and polynomial degree.  We also define the notion of an Empirical Complexity Minimization oracle, analogous to ERM, that computes the type of optimization needed in general for achieving an optimal regularized robustly-reliable learner.

\section{Discussion and Conclusion}

In this work, we define and analyze the notion of a {\em regularized} robustly-reliable learner that can provide meaningful reliability guarantees even for highly-flexible hypothesis classes. 
We give a generic pointwise-optimal algorithm, proving that it provides the largest possible reliability region simultaneously for all possible target complexity levels.  We also analyze the probability mass of this region under iid data for the Number of Alternations complexity measure, giving a bound on the number of samples sufficient for it to have probability mass at least $1-\epsilon$ with high probability.  We then give efficient optimal such learners for several natural complexity measures including Number of Alternations and Global Margin (for binary classification). In the Number of Alternations case, the algorithm uses bidirectional Dynamic Programming to be able to provide its reliability guarantees quickly on new test points without needing to retrain. 
For the case of Global Margin, we show that our problem can be reduced to that of computing maximum matchings in a collection of bipartite graphs, and we utilize dynamic matching algorithms to produce outputs on test points more quickly than retraining from scratch. A limitation of our work is that in general these guarantees can be very expensive computationally. Nonetheless, We believe our formulation provides an interesting approach to giving meaningful per-instance guarantees for flexible hypothesis families in the face of data-poisoning attacks.

\subsection*{Acknowledgments}
This work was supported in part by the National Science
Foundation under grants CCF-2212968, ECCS-2216899, and ECCS-2217023, and by the Office of Naval Research MURI Grant N000142412742.  The authors would also like to thank Nina Balcan for helpful discussions.

\bibliography{PWGuaarantees}

\begin{thebibliography}{25}
\providecommand{\natexlab}[1]{#1}
\providecommand{\url}[1]{\texttt{#1}}
\expandafter\ifx\csname urlstyle\endcsname\relax
  \providecommand{\doi}[1]{doi: #1}\else
  \providecommand{\doi}{doi: \begingroup \urlstyle{rm}\Url}\fi

\bibitem[Alimonti and Kann(2000)]{ALIMONTI2000123}
Paola Alimonti and Viggo Kann.
\newblock Some apx-completeness results for cubic graphs.
\newblock \emph{Theoretical Computer Science}, 237\penalty0 (1):\penalty0 123--134, 2000.
\newblock ISSN 0304-3975.
\newblock \doi{https://doi.org/10.1016/S0304-3975(98)00158-3}.
\newblock URL \url{https://www.sciencedirect.com/science/article/pii/S0304397598001583}.

\bibitem[Awasthi et~al.(2017)Awasthi, Balcan, and Long]{awasthi2017power}
Pranjal Awasthi, Maria~Florina Balcan, and Philip~M Long.
\newblock The power of localization for efficiently learning linear separators with noise.
\newblock \emph{Journal of the ACM (JACM)}, 63\penalty0 (6):\penalty0 1--27, 2017.

\bibitem[Balcan and Haghtalab(2021)]{balcan2021noise}
Maria-Florina Balcan and Nika Haghtalab.
\newblock Noise in classification.
\newblock \emph{Beyond the Worst-Case Analysis of Algorithms}, page 361, 2021.

\bibitem[Balcan et~al.(2022)Balcan, Blum, Hanneke, and Sharma]{balcan2022robustly}
Maria-Florina Balcan, Avrim Blum, Steve Hanneke, and Dravyansh Sharma.
\newblock Robustly-reliable learners under poisoning attacks.
\newblock In \emph{Conference on Learning Theory}, pages 4498--4534. PMLR, 2022.

\bibitem[Balcan et~al.(2023)Balcan, Hanneke, Pukdee, and Sharma]{balcan2023reliable}
Maria-Florina~F Balcan, Steve Hanneke, Rattana Pukdee, and Dravyansh Sharma.
\newblock Reliable learning in challenging environments.
\newblock \emph{Advances in Neural Information Processing Systems}, 36:\penalty0 48035--48050, 2023.

\bibitem[Barreno et~al.(2006)Barreno, Nelson, Sears, Joseph, and Tygar]{10.1145/1128817.1128824}
Marco Barreno, Blaine Nelson, Russell Sears, Anthony~D. Joseph, and J.~D. Tygar.
\newblock Can machine learning be secure?
\newblock In \emph{Proceedings of the 2006 ACM Symposium on Information, Computer and Communications Security}, ASIACCS '06, page 16–25, New York, NY, USA, 2006. Association for Computing Machinery.
\newblock ISBN 1595932720.
\newblock \doi{10.1145/1128817.1128824}.
\newblock URL \url{https://doi.org/10.1145/1128817.1128824}.

\bibitem[Biggio et~al.(2011)Biggio, Nelson, and Laskov]{pmlr-v20-biggio11}
Battista Biggio, Blaine Nelson, and Pavel Laskov.
\newblock Support vector machines under adversarial label noise.
\newblock In Chun-Nan Hsu and Wee~Sun Lee, editors, \emph{Proceedings of the Asian Conference on Machine Learning}, volume~20 of \emph{Proceedings of Machine Learning Research}, pages 97--112, South Garden Hotels and Resorts, Taoyuan, Taiwain, 14--15 Nov 2011. PMLR.
\newblock URL \url{https://proceedings.mlr.press/v20/biggio11.html}.

\bibitem[Bona(2016)]{bona2016walk}
M.~Bona.
\newblock \emph{Walk Through Combinatorics, A: An Introduction To Enumeration And Graph Theory (Fourth Edition)}.
\newblock World Scientific Publishing Company, 2016.
\newblock ISBN 9789813148864.
\newblock URL \url{https://books.google.com/books?id=uZRIDQAAQBAJ}.

\bibitem[Bosek et~al.(2014)Bosek, Leniowski, Sankowski, and Zych]{6979023}
Bartlomiej Bosek, Dariusz Leniowski, Piotr Sankowski, and Anna Zych.
\newblock Online bipartite matching in offline time.
\newblock In \emph{2014 IEEE 55th Annual Symposium on Foundations of Computer Science}, pages 384--393, 2014.
\newblock \doi{10.1109/FOCS.2014.48}.

\bibitem[Bshouty et~al.(2002)Bshouty, Eiron, and Kushilevitz]{BSHOUTY2002255}
Nader~H. Bshouty, Nadav Eiron, and Eyal Kushilevitz.
\newblock Pac learning with nasty noise.
\newblock \emph{Theoretical Computer Science}, 288\penalty0 (2):\penalty0 255--275, 2002.
\newblock ISSN 0304-3975.
\newblock \doi{https://doi.org/10.1016/S0304-3975(01)00403-0}.
\newblock URL \url{https://www.sciencedirect.com/science/article/pii/S0304397501004030}.
\newblock Algorithmic Learning Theory.

\bibitem[Chen et~al.(2022)Chen, Kyng, Liu, Peng, Gutenberg, and Sachdeva]{chen2022maximum}
Li~Chen, Rasmus Kyng, Yang~P Liu, Richard Peng, Maximilian~Probst Gutenberg, and Sushant Sachdeva.
\newblock Maximum flow and minimum-cost flow in almost-linear time.
\newblock In \emph{2022 IEEE 63rd Annual Symposium on Foundations of Computer Science (FOCS)}, pages 612--623. IEEE, 2022.

\bibitem[Chen et~al.(2017)Chen, Liu, Li, Lu, and Song]{chen2017targetedbackdoorattacksdeep}
Xinyun Chen, Chang Liu, Bo~Li, Kimberly Lu, and Dawn Song.
\newblock Targeted backdoor attacks on deep learning systems using data poisoning, 2017.
\newblock URL \url{https://arxiv.org/abs/1712.05526}.

\bibitem[Chi et~al.(2022)Chi, Duan, Xie, and Zhang]{chi2022faster}
Shucheng Chi, Ran Duan, Tianle Xie, and Tianyi Zhang.
\newblock Faster min-plus product for monotone instances.
\newblock In \emph{Proceedings of the 54th Annual ACM SIGACT Symposium on Theory of Computing}, pages 1529--1542, 2022.

\bibitem[El-Yaniv and Wiener(2010)]{JMLR:v11:el-yaniv10a}
Ran El-Yaniv and Yair Wiener.
\newblock On the foundations of noise-free selective classification.
\newblock \emph{Journal of Machine Learning Research}, 11\penalty0 (53):\penalty0 1605--1641, 2010.
\newblock URL \url{http://jmlr.org/papers/v11/el-yaniv10a.html}.

\bibitem[Gao et~al.(2021)Gao, Karbasi, and Mahmoody]{gao2021learningcertificationinstancetargetedpoisoning}
Ji~Gao, Amin Karbasi, and Mohammad Mahmoody.
\newblock Learning and certification under instance-targeted poisoning.
\newblock In \emph{Uncertainty in Artificial Intelligence}, pages 2135--2145. PMLR, 2021.

\bibitem[Geiping et~al.(2021)Geiping, Fowl, Huang, Czaja, Taylor, Moeller, and Goldstein]{geiping2021witchesbrewindustrialscale}
Jonas Geiping, Liam Fowl, W.~Ronny Huang, Wojciech Czaja, Gavin Taylor, Michael Moeller, and Tom Goldstein.
\newblock Witches' brew: Industrial scale data poisoning via gradient matching, 2021.
\newblock URL \url{https://arxiv.org/abs/2009.02276}.

\bibitem[Kearns and Li(1993)]{doi:10.1137/0222052}
Michael Kearns and Ming Li.
\newblock Learning in the presence of malicious errors.
\newblock \emph{SIAM Journal on Computing}, 22\penalty0 (4):\penalty0 807--837, 1993.
\newblock \doi{10.1137/0222052}.
\newblock URL \url{https://doi.org/10.1137/0222052}.

\bibitem[Klivans et~al.(2009)Klivans, Long, and Servedio]{inproceedingsKlivans2009halfspace}
Adam Klivans, Philip Long, and Rocco Servedio.
\newblock Learning halfspaces with malicious noise.
\newblock volume~10, pages 2715--2740, 12 2009.
\newblock ISBN 978-3-642-02926-4.
\newblock \doi{10.1007/978-3-642-02927-1_51}.

\bibitem[Kőnig(1950)]{konig1950theory}
Dénes Kőnig.
\newblock \emph{Theory of Finite and Infinite Graphs}.
\newblock Birkhäuser, Boston, 1950.
\newblock English translation of the original 1931 German edition.

\bibitem[Levine and Feizi(2021)]{levine2021deeppartitionaggregationprovable}
Alexander Levine and Soheil Feizi.
\newblock Deep partition aggregation: Provable defense against general poisoning attacks, 2021.
\newblock URL \url{https://arxiv.org/abs/2006.14768}.

\bibitem[Mozaffari-Kermani et~al.(2015)Mozaffari-Kermani, Sur-Kolay, Raghunathan, and Jha]{6868201mozafari}
Mehran Mozaffari-Kermani, Susmita Sur-Kolay, Anand Raghunathan, and Niraj~K. Jha.
\newblock Systematic poisoning attacks on and defenses for machine learning in healthcare.
\newblock \emph{IEEE Journal of Biomedical and Health Informatics}, 19\penalty0 (6):\penalty0 1893--1905, 2015.
\newblock \doi{10.1109/JBHI.2014.2344095}.

\bibitem[Rivest and Sloan(1988)]{rivest1988learning}
Ronald~L Rivest and Robert~H Sloan.
\newblock Learning complicated concepts reliably and usefully.
\newblock In \emph{Proceedings of the Association for the Advancement of Artificial Intelligence (AAAI)}, pages 635--640, 1988.

\bibitem[Shafahi et~al.(2018)Shafahi, Huang, Najibi, Suciu, Studer, Dumitras, and Goldstein]{shafahi2018poisonfrogstargetedcleanlabel}
Ali Shafahi, W~Ronny Huang, Mahyar Najibi, Octavian Suciu, Christoph Studer, Tudor Dumitras, and Tom Goldstein.
\newblock Poison frogs! targeted clean-label poisoning attacks on neural networks.
\newblock \emph{Advances in neural information processing systems}, 31, 2018.

\bibitem[Suciu et~al.(2018)Suciu, Marginean, Kaya, III, and Dumitras]{217486}
Octavian Suciu, Radu Marginean, Yigitcan Kaya, Hal~Daume III, and Tudor Dumitras.
\newblock When does machine learning {FAIL}? generalized transferability for evasion and poisoning attacks.
\newblock In \emph{27th USENIX Security Symposium (USENIX Security 18)}, pages 1299--1316, Baltimore, MD, August 2018. USENIX Association.
\newblock ISBN 978-1-939133-04-5.
\newblock URL \url{https://www.usenix.org/conference/usenixsecurity18/presentation/suciu}.

\bibitem[Valiant(1985)]{valiant1985learning}
Leslie~G Valiant.
\newblock Learning disjunctions of conjunctions.
\newblock In \emph{Proceedings of the 9th International Joint Conference on Artificial Intelligence}, pages 560--566, 1985.

\end{thebibliography}
\clearpage
\appendix
\section{Empirical Complexity Minimization}
\begin{definition}[Empirical Complexity Minimization]
Given a complexity measure $\mathcal{C}$, a hypothesis class $\mathcal{H}$, a training set $S' = \{(x_1, y_1), (x_2, y_2), ..., (x_n, y_n)\}$, and a mistake budget $b$, let $\mathcal{H}_{b,S'}$ be the set of hypotheses that make at most $b$ mistakes on $S'$:
   \[
    \mathcal{H}_{b,S'} = \{ h \mid \sum_{i=1}^n \mathbf{1} [h(x_i)\neq y_i] \leq b\}.
    \] 
   For a data-independent complexity measure, we define the ECM learning rule to choose 
   \[
h_{\text{ECM}} = \arg\min_{h \in \mathcal{H}_{b,S'}} \mathcal{C} (h)
\]
For training-data-dependent complexity measures, we replace $\mathcal{C} (h)$ with 
the minimum value of $\CC(h,\tilde{S})$ over all candidates $\tilde{S}$ for the original training set $S$; that is, $\min\{\CC(h,\tilde{S}): S'\in\mcA_b(\tilde{S})$ and  $h\in \mcH_{0,\tilde{S}}\}$.
When the complexity measure is test-data-dependent (or training-and-test dependent), we define the ECM learning rule to output just the complexity value, rather than a hypothesis.
\[\underset{h \in \mathcal{H}_{b,S'} : h(x_{\text{test}}) = y_{\text{test}}}{\min}
\mathcal{C} (h,x_{\text{test}})
\qquad \mbox{or} \qquad
\underset{h \in \mathcal{H}_{b,S'} : h(x_{\text{test}}) = y_{\text{test}}}{\min}
\mathcal{C} (h,S',b,x_{\text{test}}),
 \]
where $\CC(h,S',b,x_{test})$ is the minimum value of $\CC(h,\tilde{S},x_{test})$ over all candidates $\tilde{S}$ for the original training set $S$.
 \label{ECM}
\end{definition}

Note that for test-data-dependent complexity measures, an ECM oracle only outputs a complexity value, rather than a classifier, and so would be called for each possible label $y_{test}$, with the algorithm choosing the label of lowest complexity.  The reason for this is that typically for such measures, the full classifier itself is quite complicated (e.g., a full Voronoi diagram for nearest-neighbor classification), whereas all we really need is a prediction on $x_{test}$.

\subsection{Other Examples of Complexity Measures \label{moremeasures}}

\begin{definition}[Interval Score] Let \( \{X_1, \ldots, X_n\} \) be a set of \( n \) independent and identically distributed real-valued random variables drawn from a distribution \( \mathcal{D} \) with cumulative distribution function \( F(t) \). The empirical distribution function \( \hat{F}_n(t) \) associated with this sample is defined as:
\[
\hat{F}_n(t) = \frac{1}{n} \sum_{i=1}^n \mathbf{1}_{\{X_i \leq t\}},
\]
where \( \mathbf{1}_{\{X_i \leq t\}} \) denotes the indicator function that is 1 if \( X_i \leq t \) and 0 otherwise. Consider \( m \) disjoint intervals \( I_i = (s_i, e_i] \) on the real line, where \( 1 \leq i \leq m \). Each interval \( I_i \) is associated with a sequence of sample points sharing a common label. The empirical probability mass within an interval \( I_i \) is given by:
\[
\hat{F}_n(e_i) - \hat{F}_n(s_i) = \frac{1}{n} \sum_{j=1}^n \mathbf{1}_{\{s_i < X_j \leq e_i\}}.
\]
We define the interval score for \( I_i \) as:
\begin{align}
    \text{Score}(I_i) = \frac{n}{1+\sum_{j=1}^n \mathbf{1}_{\{s_i < X_j \leq e_i\}}} = \frac{n}{n \cdot \left(\hat{F}_n(e_i) - \hat{F}_n(s_i)+1\right) } = \frac{1}{\hat{F}_n(e_i) - \hat{F}_n(s_i) +1}.
\end{align}
\label{intervalscore}
 \end{definition}
In the definition of the score, we add one to the denominator to make sure that every $I_i$ has a non-zero count.
This score reflects the inverse of the empirical probability mass contained within the interval \( I_i \), and is a \emph{training-data-dependent} measure. A lower mass results in a higher score, indicating that the interval captures a more “complex" region of the sample space.
We then define the Interval Probability Mass complexity using Definition \ref{intervalscore} above.

\begin{definition}[Interval Probability Mass]  The Interval Probability Mass complexity of the set of intervals \( \{I_1, \ldots, I_m\} \) is then defined as the aggregate of the interval scores:
\begin{align}
    \text{Complexity}(S) = \sum_{i=1}^m \text{Score}(I_i) = \sum_{i=1}^m \frac{1}{\hat{F}_n(e_i) - \hat{F}_n(s_i)+1}.
\end{align}
\label{IntervalProbabilityMassComplexity}
 \end{definition}

Definition \ref{IntervalProbabilityMassComplexity} is a training data dependent measure that sums the contributions from all intervals, providing a scalar quantity that quantifies the distribution of the sample points across the intervals. A higher complexity suggests that the sample is dispersed across many low-mass intervals.

\begin{definition}[Degree of Polynomial]
Let \( f(x) = \text{sign} [p(x)] \), where \( f: \mathbb{R}^n \to \{-1, +1\} \) is defined by a polynomial function \( p(x_1, x_2, \dots, x_n) \) over the input space \( \mathcal{X} \subseteq \mathbb{R}^n\), and the function value changes between \( +1 \) and \( -1 \) based on the sign of \( p(x) \).
\[
p(x) = \sum_{\alpha_1, \alpha_2, \dots, \alpha_n} c_{\alpha_1, \alpha_2, \dots, \alpha_n} x_1^{\alpha_1} x_2^{\alpha_2} \dots x_n^{\alpha_n},
\]
where \( \alpha_1, \alpha_2, \dots, \alpha_n \geq 0 \), and \( c_{\alpha_1, \alpha_2, \dots, \alpha_n} \in \mathbb{R} \) are the polynomial coefficients. The degree of the polynomial is defined as the maximum sum of exponents \( \alpha_1 + \alpha_2 + \dots + \alpha_n \) for which the corresponding coefficient is non-zero.

\label{polynomial}
\end{definition}

Degree of Polynomial is a data independent measure. A higher degree indicates more intricate changes in the sign of \( f(x) \) across the input space, corresponding to a more complex and flexible boundary.  Note that in $\mathbb{R}^1$, the Number of Alternations is a lower bound on the Degree of Polynomial.  In Sections \ref{sec:ipm} and \ref{sec:degree} we give optimal regularized robustly reliable learners for the Interval Probability Mass and Degree of Polynomial complexity measures, respectively.

\begin{figure}[h]
	{\centering
		\includegraphics[width=0.7\columnwidth]{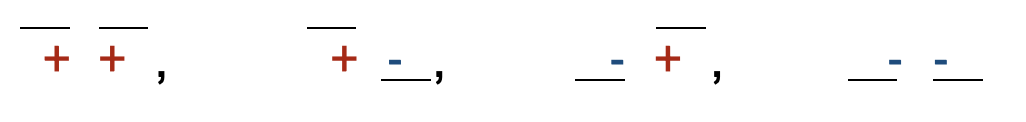} 
		\caption{\emph{Illustration of a Function's Behavior on the Left and Right Sides of a Test Point: } \textbf{Leftmost:} The function labels both the leftmost and rightmost neighbors of the test point as positive. Labeling the test point as positive does not increase complexity, but labeling it as negative increases the complexity by two.
        \textbf{Middle Figures:} The function labels the left neighbor as positive (or negative) and the right neighbor as negative (or positive). The complexity is the sum of the complexities on each side of the test point plus one, since the function needs to alter in order to connect the left side to the right side, regardless of the test point's label.
        \textbf{Rightmost:} The function labels both neighbors as negative. Labeling the test point as negative does not increase complexity, but labeling it as positive increases the complexity by two.}  \label{fig:functionformations}}
\end{figure}

\subsection{Number of Alterations}
\subsubsection{Proof of theorem \ref{theoremNAB} \label{proofofalter}}
\begin{manualtheorem}{\ref{theoremNAB}}
For binary classification, an optimal regularized-robustly-reliable learner (Definition \ref{ER3}) can be implemented efficiently for complexity measure Number of Alterations (Definition \ref{NumberofAlterationss}).
\end{manualtheorem}

\medskip

\begin{proof} Algorithm \ref{alternationsalgorithm} is the solution. We now prove its correctness.
First, we define the DPs that store the scores used, then we use the DP table to compute the complexity level when the test point and mistake budget arrive. We define $DP+, DP-, DP'+, DP'-$ each of which are 2D tables of size $n \times (n+1)$. The rows of the tables denote the position of the current data point, namely for $DP+$ and $DP-$, we denote the rightmost point by index 0, and the leftmost point by index $n-1$. As for $DP'+$ and $DP'-$, the rows of the tables denote the position of the current data point in the reverse sequence, i.e., we denote the rightmost point by index $n-1$, and the leftmost point by index $0$. The columns of the tables denote the number of mistakes made up to that point which can vary between $0$ to the position of the current point$+1$. We provide the proof of correctness for $DP+$, and it is similar for the other three.

 Consider \( i = 0 \) (the first point in the sequence):
\begin{itemize}
    \item \textbf{If \( a[0] = \texttt{'+'} \):}
    \begin{itemize}
        \item We initialize \( DP_+[0][0] = 0 \) because the complexity is 0 with no mistakes made, and the rightmost point is positive.
        \item We set \( DP_+[0][1] = \infty \) since no mistakes can be made yet.
    \end{itemize}
    \item \textbf{If \( a[0] = \texttt{'-'} \):}
    \begin{itemize}
        \item We initialize \( DP_+[0][0] = \infty \) because it is impossible to have the rightmost point be positive without making a mistake.
        \item We set \( DP_+[0][1] = 0 \) because removing the negative point gives a valid sequence with complexity 0.
    \end{itemize}
\end{itemize}

The base case correctly handles both possible labels of the first point, ensuring the initialization aligns with the definition of \( DP_+ \).

\textbf{Induction Hypothesis:} Assume that for all \( i' < i \) and all \( j \), the table entries \( DP_+[i'][j] \) correctly compute the minimum complexity level such that the number of mistakes up to position \( i' \) is \( j \) and the rightmost existing point in the sequence is positive.

\textbf{Inductive Step:} We need to show that \( DP_+[i][j] \) is correctly computed for position \( i \).

\begin{itemize}
    \item \textbf{Case 1: \( a[i] = \texttt{'+'} \)}
    \begin{itemize}
        \item We have three possible scenarios:
        \begin{enumerate}
            \item \textbf{Keep the point \( a[i] \) without making a mistake:} This scenario corresponds to \( DP_+[i-1][j] \).
            \item \textbf{Remove \( a[i] \) and use \( j-1 \) mistakes} if the leftmost point is positive: This scenario corresponds to \( DP_+[i-1][j-1] \).
            \item \textbf{Switch the rightmost point from \( - \) to \( + \)}, which adds one to the complexity due to the Alterations: This scenario corresponds to \( DP_-[i-1][j] + 1 \).
        \end{enumerate}
        Thus, the recursive relation is:
        \[
        DP_+[i][j] = \min(DP_+[i-1][j], DP_+[i-1][j-1], DP_-[i-1][j] + 1)
        \]
        This relation captures all the valid ways to ensure the rightmost point is positive while maintaining exactly \( j \) mistakes.
    \end{itemize}
    \item \textbf{Case 2: \( a[i] = \texttt{'-'} \)}
    \begin{itemize}
        \item To maintain the rightmost point as positive, we must remove \( a[i] \), which requires using one of the allowed mistakes:
        \[
        DP_+[i][j] = DP_+[i-1][j-1]
        \]
        This equation reflects the necessity to remove a negative point to maintain a valid sequence with a positive rightmost point.
    \end{itemize}
\end{itemize}

Since the recursive relation properly handles both cases for the current point \( i \) based on its label, and the inductive hypothesis ensures correctness for all prior points, the table entry \( DP_+[i][j] \) is correctly computed.

\textbf{Computing the test label efficiently:}
We now use the DP tables to obtain the test label.  Note that our approach does not require re-training to compute the test label efficiently.

Once we receive the test point's position along with the adversary's budget, $b$, we compute the \emph{exact} minimum complexity needed to label it point as positive and negative. 
We denote the test point's position by $test\_pos$, there are four different possibilities for how a function could behave on the left side and the right side of the test point. See figure \ref{fig:functionformations}.

Given $b$, we iterate over all possible divisions of mistake budget between the left side and the right side of the test point in each of these four formations. Define the minimum complexity to label the test point as positive, $c_+$, and the minimum complexity to label the test point as negative, $c_-$. Then, $c_{\text{low}} = \min \{ c_+, c_- \}$, and $c_{\text{high}} = \max \{ c_+, c_- \}$. We output $y_{\text{test}} = \underset{+,-}{\text{argmin}} \{ c_+, c_- \} $, along with $c_{\text{low}}, c_{\text{high}}$.
\end{proof}

\begin{remark}
    It suffices to run the test prediction with the entire mistake budget, $b$, since with more deletions the complexity never increases. We use this fact to fill our DP tables as well as do test time computations more efficiently.
    \label{monotonicallydecreasing}
\end{remark}

\begin{remark}
    Theorem \ref{theoremNAB} can be generalized to classification tasks with more than two classes.
\end{remark}

\begin{definition}[\((\min, +)\)-Convolution]
Given two sequences \( a = (a[i])_{i=0}^{n-1} \) and \( b = (b[i])_{i=0}^{n-1} \), the \((\min, +)\)-convolution of $a$ and $b$ is a sequence \( c = (c[i])_{i=0}^{n-1} \), where
\[
c[k] = \min_{i=0,\dots,k} \{ a[i] + b[k - i] \}, \quad \text{for } k = 0, \dots, n-1.
\]
\label{minsumconv}
\end{definition}

\begin{theorem}
Let \( a = (a[i])_{i=0}^{n-1} \) and \( b = (b[i])_{i=0}^{n-1} \) be two monotonically decreasing sequences of nonnegative integers, where all entries are bounded by \( O(n) \). The \((\min, +)\)-convolution of \( a \) and \( b \) can be computed in \( \tilde{O}(n^{1.5}) \) time by reducing the problem to the case of monotonically increasing sequences, which can be solved using the algorithm presented in Theorem 1.2 of \cite{chi2022faster}.
\label{minsumconvreduction}
\end{theorem}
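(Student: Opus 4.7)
The plan is to reduce the monotonically decreasing case to the monotonically increasing case by a simple reversal, and then invoke Theorem 1.2 of \cite{chi2022faster}. Concretely, I would define the reversed sequences $\tilde a[i] = a[n-1-i]$ and $\tilde b[i] = b[n-1-i]$ for $i = 0,\dots,n-1$. Since $a$ and $b$ are monotonically decreasing sequences of nonnegative integers bounded by $O(n)$, the reversed sequences $\tilde a,\tilde b$ are monotonically increasing sequences of nonnegative integers with exactly the same entries, and hence still bounded by $O(n)$. This is precisely the input form handled by Chi et al., whose algorithm then runs in $\tilde O(n^{1.5})$ time.

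The next step is to translate the output back. Expanding directly, $\tilde a[i] + \tilde b[k-i] = a[n-1-i] + b[n-1-k+i]$, and setting $p = n-1-i$, $q = n-1-k+i$ yields $p+q = 2n-2-k$. So if $C[m] := \min\{a[p] + b[q] : p+q = m,\, 0 \le p,q \le n-1\}$ denotes the full $(2n-1)$-length $(\min,+)$-convolution of $a$ and $b$, then running Chi et al.\ on $\tilde a, \tilde b$ effectively returns the entries of $C$ read in reverse order. The length-$n$ output $c$ prescribed by Definition \ref{minsumconv} satisfies $c[k] = C[k]$ for $k \in [0,n-1]$ (since for $k \le n-1$ the constraint $p+q = k$ with $p,q \ge 0$ automatically forces $p,q \le n-1$), so $c$ is obtained from the reversed output in $O(n)$ postprocessing time.

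The main obstacle is a minor but genuine indexing subtlety: the entries of $c$ we care about correspond, under the reversal, to the \emph{upper} half of the length-$(2n-1)$ convolution of $\tilde a, \tilde b$, not to the length-$n$ truncation of Definition \ref{minsumconv}. I would address this by one of two routine workarounds: (i) invoke Theorem 1.2 of \cite{chi2022faster} on $\tilde a, \tilde b$ padded to length $2n-1$ by repeating their final entries, which preserves both monotone increase and the $O(n)$ bound on entries and yields the full $(2n-1)$-length convolution directly; or (ii) split each sequence into two halves and apply the algorithm to $O(1)$ pairs so that the desired output indices are covered. Either workaround keeps the total running time at $\tilde O(n^{1.5})$, and composing the reversal, the invocation of Chi et al.'s algorithm, and the $O(n)$ index readout yields the claimed bound. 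The arithmetic of the index substitution and the preservation of monotonicity under padding are routine, leaving the substantive content to be exactly the monotonically increasing algorithm of \cite{chi2022faster} combined with the reversal trick.
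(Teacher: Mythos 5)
Your reversal step matches the paper's, and you correctly identify the key subtlety: after reversing, the entries you need are the upper half (output indices $n{-}1$ through $2n{-}2$) of the full length-$(2n{-}1)$ convolution of $\tilde a,\tilde b$, not the length-$n$ prefix of Definition~\ref{minsumconv}. However, your workaround (i) is wrong: padding $\tilde a,\tilde b$ by repeating their final entries introduces spurious small terms into the minimum, because a repeated entry at padded position $i \ge n$ is just a copy of $\tilde a[n{-}1]$, so the corresponding term at output index $k$ really pairs underlying indices whose true sum is $<k$. Concretely, take $n{=}2$, $a{=}(3,1)$, $b{=}(2,0)$, so $c[0]=a[0]+b[0]=5$. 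Reversing gives $\tilde a{=}(1,3)$, $\tilde b{=}(0,2)$; your padding yields $(1,3,3)$ and $(0,2,2)$, whose convolution at index $2$ is $\min(1{+}2,\ 3{+}2,\ 3{+}0)=3\ne 5$, because the last term pairs the padded copy of $a[0]$ with $b[1]$, a combination that belongs to output index $1$, not $2$.

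The paper's fix is to pad with $\infty$ (or, to respect the $O(n)$ entry bound required by \citet{chi2022faster}, with any finite sentinel exceeding twice the maximum entry, which is still $O(n)$). Then any term involving a padded entry can never be the minimum, while for every output index $k\ge n{-}1$ at least one legitimate pair $(p,q)$ with $0\le p,q\le n{-}1$ and $p+q=2n{-}2{-}k$ survives, so the last $n$ entries of the padded convolution, reversed, give exactly $c[0],\dots,c[n-1]$. Your option (ii) is too underspecified to verify (a naive two-way split does not make the required output indices line up), and in any case is unnecessary once one uses a large sentinel rather than a repeated value.
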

\begin{proof}
The reduction that transforms monotonically decreasing sequences into monotonically increasing sequences is standard; we provide it here for completeness.  This reduction allows the application of the efficient algorithm from \cite{chi2022faster}.

Given the input sequences \( a = (a[i])_{i=0}^{n-1} \) and \( b = (b[i])_{i=0}^{n-1} \), we first reverse them to obtain:
\[
a_{\text{reverse}} = (a[n-1], a[n-2], \dots, a[0]), \quad
b_{\text{reverse}} = (b[n-1], b[n-2], \dots, b[0]).
\]
The reversed sequences are now monotonically increasing. We then append \( n-1 \) infinities to both sequences, resulting in:
\[
a' = [a_{\text{reverse}}, \infty, \infty, \dots, \infty], \quad
b' = [b_{\text{reverse}}, \infty, \infty, \dots, \infty].
\]
These transformation steps take \( O(n) \) time. Now, we can apply the algorithm from \cite{chi2022faster}, which computes the \((\min, +)\)-convolution of the monotonically increasing sequences in \( \tilde{O}(n^{1.5}) \) time. Let the result be the sequence \( c' \):
\[
c'_k = \min_{0 \leq i \leq k} (a'_i + b'_{k-i}), \quad \text{for } k = 0, \dots, 2n-2.
\]
We claim that removing the first \( n \) elements of \( c' \) and reversing the remaining sequence yields the desired convolution of the original sequences. Specifically:

\begin{itemize}
    \item The first \( n \) elements of \( c' \) represent cases with an excessive mistake budget and should be discarded. For example, \( c'[0] \) corresponds to a budget of \( 2n \), \( c'[1] \) to \( 2n-1 \), and so on, down to \( c'[n-1] \), which corresponds to \( n+1 \).
    \item For indices \( k \geq n \), the infinite values in the padded sequences force convolution contributions from lower indices to be ignored, ensuring correctness.
\end{itemize}

Thus, extracting the last \( n \) elements from \( c' \) and reversing their order reconstructs the desired convolution of the original decreasing sequences, which completes the proof.
\end{proof}

\AtEndDocument{\begin{algorithm}[h]
\label{dpscore-NumberofAlterations-MaliciousNoise}
\caption{DP Score of Number of Alterations (Definition \ref{NumberofAlterationss})}
\KwIn{$a$: Train set}
\KwOut{$DP_+$, $DP_-$, $DP'_+$, $DP'_-$}
\SetKwFunction{DpScore}{DpScore}
\SetKwProg{Fn}{Function}{:}{}
\Fn{\DpScore{$a,b$}}{
    $n \gets$ \text{length}$(a)$\;
    $a\_reversed \gets \text{reverse}(a)$\;
    
    \For{$i \gets 0$ \KwTo $n$}{
        \For{$k \gets 0$ \KwTo $n-1$}{
            $DP_+[i][k], DP_-[i][k], DP'_+[i][k], DP'_-[i][k] \gets \infty$\;
        }
    }

    \eIf{$a[0] = '$+$'$}{
        {$DP_+[0][0] \gets 0$\\
        $DP_-[0][1] \gets 0$\;}
    }{
       { $DP_+[0][1] \gets 0$\\
        $DP_-[0][0] \gets 0$\;}
    }

    \eIf{$a\_reversed[0] = '$+$'$}{
        {$DP'_+[0][0] \gets 0$\\
        $DP'_-[0][1] \gets 0$\;}
    }{
      {  $DP'_+[0][1] \gets 0$\\
        $DP'_-[0][0] \gets 0$\;}
    }

    \For{$i \gets 1$ \KwTo $n-1$}{
        \For{$j \gets 0$ \KwTo $i+1$}{
            \If{$a[i] = '$+$'$}{
             {   $DP_+[i][j] \gets \min(DP_+[i-1][j], DP_+[i-1][j-1], DP_-[i-1][j] + 1)$\\
                $DP_-[i][j] \gets DP_-[i-1][j-1]$\;}
            }
            \ElseIf{$a[i] = '$-$'$}{
               { $DP_-[i][j] \gets \min(DP_+[i-1][j], DP_+[i-1][j-1], DP_+[i-1][j] + 1)$\\
                $DP_+[i][j] \gets DP_+[i-1][j-1]$\;}
            }

            \If{$a'[i] = '$+$'$}{
                {$DP'_+[i][j] \gets \min(DP'_+[i-1][j], DP'_+[i-1][j-1], DP'_-[i-1][j] + 1)$\\
                $DP'_-[i][j] \gets DP'_-[i-1][j-1]$\;}
            }
            \ElseIf{$a'[i] = '$-$'$}{
               { $DP'_-[i][j] \gets \min(DP'_+[i-1][j], DP'_+[i-1][j-1], DP'_+[i-1][j] + 1)$\\
                $DP'_+[i][j] \gets DP'_+[i-1][j-1]$\;}
            }
        }
    }

    \KwRet{$DP_+$, $DP_-$, $DP'_+$, $DP'_-$}\;
}
\label{alternationsalgorithm}
\end{algorithm}}

\subsubsection{Proof of theorem \ref{NASC} \label{samplecomplexity}}

\begin{manualtheorem}{\ref{NASC}}
Suppose the Number of Alterations (Definition \ref{NumberofAlterationss}) of the target function is \( c \). For any \( \epsilon, \delta \in (0,1) \), and any mistake budget \( b \), if the size of the (clean) sample \( S \sim \mathcal{D}^m \) is at least \( \tilde{O}\left(\frac{(b+1)c}{\epsilon}\right) \), and as long as there is at least $\frac{\epsilon}{2c}$ probability mass to the left and right of each alternation of the target function,
with probability at least $1-\delta$, the \textit{optimal regularized robustly reliable region}, \( \optr(S, c, b) \), contains at least a \( 1-\epsilon \) probability mass of the distribution. 
\end{manualtheorem}

\smallskip

\begin{proof}
We want to make sure with probability at least $1-\delta$,  the {optimal regularized robustly reliable region}, \( \optr(S, c, b) \), contains at least $1-\epsilon$ probability mass. Define \( 2c \) intervals \( I_1, I_2, \dots, I_{2c} \),  each of probability mass $\frac{\epsilon}{2c}$ to the left and right of each alternation of the target function $f^*$.  Without loss of generality, assume $I_1$ is positive, $I_2$ and $I_3$ are negative, $I_4$ and $I_5$ are positive, etc., according to  $f^*$.  We will show that a sample size of $\tilde{O}( \frac{(b+1)c}{\epsilon})$ is sufficient so that with high probability, $S$ contains at least $b+1$ points in each of these intervals $I_j$.  Assuming $S$ indeed contains such points, then any classifier that does not label at least one point in each interval correctly must have error strictly larger than $b$.  This in turn implies that any classifier $h$ with $b$ or fewer mistakes on $S$ must have an alternation from positive to negative within $I_1 \cup I_2$, an alternation from negative to positive within $I_3 \cup I_4$, etc.  Therefore, if $h$ has complexity $c$, it {\em cannot} have any alternations outside of $\bigcup_j I_j$ and indeed must label all of $\mathbb{R}-\bigcup_j I_j$ in the same way as $f^*$.  So, all that remains is to argue the sample size bound.

We will use concentration inequalities to derive a bound on the probability that less than \( b+1 \) points from the sample fall into any of the \( 2c \) intervals.
Let \( X_i \) be an indicator random variable such that:

\[
X_i = 
\begin{cases}
1, & \text{if the } i\text{-th sample point falls into interval } I_j, \\
0, & \text{otherwise.}
\end{cases}
\]

Thus, the sum \( \sum_{i=1}^m X_i \) represents the number of sample points in \( S \) that fall into interval \( I_j \).

The expected number of points in \( I_j \), denoted as \( \mu \), is given by:

\[
\mu = \mathbb{E}\left[\sum_{i=1}^m X_i \right] = m \cdot \frac{\epsilon}{2c}.
\]

We are interested in the probability that less than or equal to \( b+1 \) points fall into any of the \( 2c \) intervals. We use the union bound to ensure that this probability holds across all intervals. That is we will show
\[
\mathbb{P}\left( \exists j \text{ such that } \sum_{i=1}^m X_i \leq b \right) \leq \delta.
\]
To do this, we will prove for a single interval \( I_j \):

\[
\mathbb{P}\left(\sum_{i=1}^m X_i \leq b \right) \leq \frac{\delta}{2c}.
\]
Next, we apply Chernoff bounds to control the probability that fewer than \( b+1 \) points fall into any interval. We are interested in the lower tail of the distribution, and Chernoff's inequality gives us the following bound:
\[
\mathbb{P}\left( \sum_{i=1}^m X_i \leq \frac{\mu}{2} \right) \leq e^{-\frac{\mu}{8}}.
\]
To ensure that this probability is smaller than \( \frac{\delta}{2c} \), it suffices to have
\[
\mu \geq 8 \ln\left( \frac{2c}{\delta} \right).
\]
We also need to ensure that the expected number of points in any interval is sufficiently large to account for the threshold \( b+1 \). Specifically, we need:

\[
\mu \geq 2(b+1).
\]
Combining both conditions, we require:
\[
\mu \geq \max\left\{ 2(b+1), 8 \ln\left( \frac{2c}{\delta} \right) \right\}.
\]
\[
m \cdot \frac{\epsilon}{2c} \geq 2(b+1) + 8 \ln\left( \frac{2c}{\delta} \right).
\]
\[
m \geq \frac{2c \left( 2(b+1) + 8 \ln\left( \frac{2c}{\delta} \right) \right)}{\epsilon}.
\]
Thus, the sample complexity \( m \) is bounded by:
\[
m = \tilde{O}\left( \frac{(b+1)c}{\epsilon} \right),
\]
Which ensures with high probability \( \optr(S, c, b) \) contains $1-\epsilon$ of the probability mass. Therefore, any test point drawn from the same distribution as $S$, with probability $1-\epsilon$ belongs to the optimal regularized robustly reliable region.
\end{proof}

\subsection{Local Margin \label{localmarginproofAppendixx}}

\begin{example}[Local Margin]
Consider the training set $S'$ and test point $x_{test}$ shown in Figure \ref{fig:fig3margin}. For mistake budget $b=1$, the local margin of the (dark blue point in the center) test point \( (x_{test},y_{test}) \) is $2$ if it is labeled as positive, and $1$ if it is labeled as negative.  Table \ref{tab:br_values2} shows the optimal intervals $(c_{low},c_{high})$ for all values of $b$.
 \begin{figure}[h]
\centering
\begin{minipage}{0.35\textwidth}
    \centering
    \includegraphics[width=\textwidth]{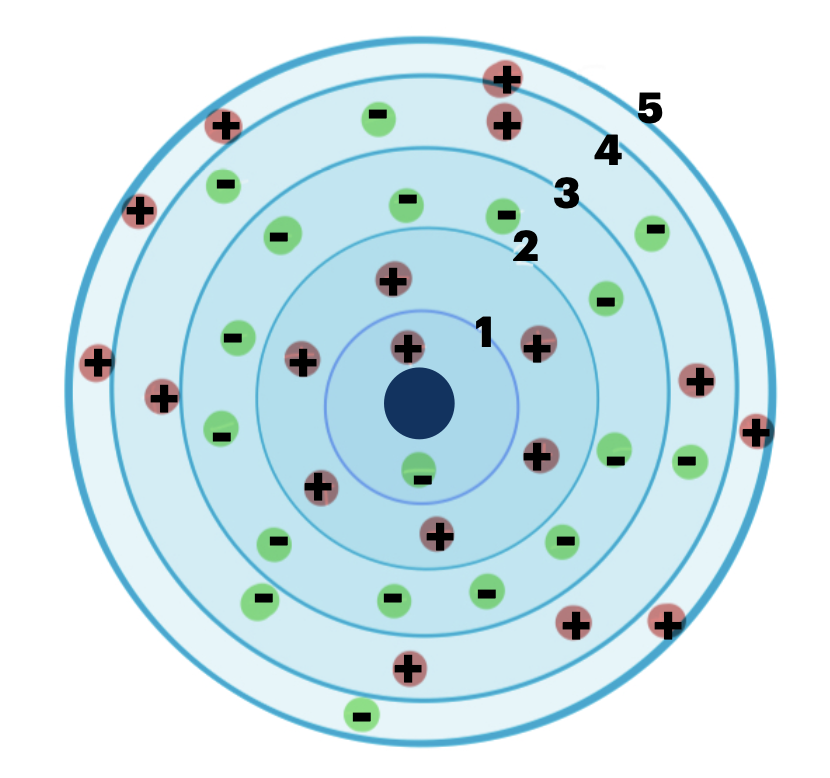} 
    \caption{Local Margin example \hspace{0.1in} ($x_{test}$ at center)}
    \label{fig:fig3margin}
\end{minipage}%
\hfill
\begin{minipage}{0.6\textwidth}
\centering
\begin{tabular}{|c|c|c|c|}
\hline
 Mistake Budget & Label & $(c_{\text{low}},c_{\text{high}} )$    \\ 
\hline
$b = 0$ & Any &   $(3,3)=\emptyset$     \\ 
\hline
$b = 1, 2, ..., 6$ &  $+$  &   $[\frac{1}{2},1)$ \\
\hline
$b = 7, 8, ..., 10, 11 $ & $-$  &   $[\frac{1}{3},\frac{1}{2})$  \\
\hline
$b =  12, 13, ..., 16$ & $-$  &  $[\frac{1}{4},\frac{1}{3})$  \\
\hline
$b =  17$ &  Any &   $(\frac{1}{4},\frac{1}{4})=\emptyset$   \\
\hline
$b =  18$ & Any  &   $(0,0)=\emptyset$  \\
\hline
\end{tabular} 
\caption{Guarantee for Figure \ref{fig:fig3margin}.}
\label{tab:br_values2}
\end{minipage}
\end{figure}
\end{example}

As noted in Section \ref{sec:localmargin}, the lowest-complexity classifier with respect to $(x_{test},y_{test})$ that makes at most $b$ mistakes on $S'$ has local margin (Definition \ref{localmarginrevised}) equal to the distance of the test point to the $(b+1)^{st}$ closest point with a different label. 
In particular, the margin cannot be larger than this value since at least one of these $b+1$ points must be correctly labeled by the classifier and therefore it is a legitimate choice for $x'$ in Definition \ref{localmarginrevised}.  Moreover, it is realized by the classifier that labels the open ball around $x_{test}$ of radius this radius as $y_{test}$, and then outside of this ball is consistent with the labels of $S'$.

For example, Table \ref{tab:br_values2} shows the optimal values for the data in Figure \ref{fig:fig3margin}. So long as the complexity of the target function belongs to the given interval and the adversary has corrupted at most $b$ of the training data points, the given prediction must be correct.

\subsubsection{Proof of Theorem \ref{lmtheorem}}
\begin{manualtheorem}{\ref{lmtheorem}}
For any multi-class classification task, an optimal regularized robustly reliable learner (Definition \ref{ER3}) can be implemented efficiently for complexity measure Local Margin (Definition \ref{localmarginrevised}). 
\end{manualtheorem}

\medskip

\begin{proof}
Given the training data $S'$, the test point $x_{test}$, and the mistake budget $b$, we are interested in the complexity of the classifiers with smallest local margin complexity with respect to the test point and its assigned labels, that make at most $b$ mistakes on $S'$. First, we compute the distance of all training points from the yet unlabeled test point. 
For each class label, $y_1, y_2, ..., y_m$ create a key in a dictionary and store the distances of all training points (from the test point) with labels opposite to the keys', and sort the values of every key. In a $m$-class classification, there are $m$ keys and each key has at most $n$ entries. The learner starts by labeling the test point as $y_1$, and we check the $y_1$ key in our dictionary. The $b+1$'th value is the radius of the largest open ball we can draw around the test point labeled as $y_1$ such that it contains at most $b$ points with labels different from $y_1$. We denote this radius by $r_1$. The complexity of the least complex classifier that labels the test point as $y_1$ is $c_{y_1} = \frac{1}{r_1}$. We repeat this for all classes.  
Without loss of generality, 
assume \( c_{y_1}\leq  c_{y_2} \leq \dots  \leq c_{y_k} \).
We define:
\[
c_{\text{low}} = c_{y_1}, \quad c_{\text{high}} = c_{y_2}
\]

where \( c_{\text{low}} \) represents the minimum complexity value among the different labelings of \( x_{\text{test}} \), and \( c_{\text{high}} \) represents the second-lowest complexity value.

\noindent
Finally, the predicted label for \( x_{\text{test}} \) is determined as:
\[
y = \underset{y_1, y_2, \dots, y_m}{\text{argmin}} \{ c_{y_1}, c_{y_2}, \dots, c_{y_m} \}
\]
That is, the label \( y \) corresponding to the smallest complexity value is chosen. The learner then outputs the triplet \( (y, c_{\text{low}}, c_{\text{high}}) \), where \( y \) is the predicted label, \( c_{\text{low}} \) is the lowest complexity value, and \( c_{\text{high}} \) is the second-lowest complexity value, providing a guarantee on the prediction.

\end{proof}

\subsection{Global Margin \label{globalmarginproof}}
Before proving Theorem \ref{gmL}, we first describe some useful properties of the global margin.

\subsubsection{Understanding the Global Margin \label{sec:gmunderstanding}}
Figure \ref{fig:figglobal} shows the margin on one dimensional data.  Let $S = \{ (x,y) | x \in  \mathcal{X}, y \in  \mathcal{Y}  \}$ denote the set. 
 Given a metric space $(\mathcal{M}, d_{\mathcal{M}})$, draw the largest open ball, $B(x,r_x)$ centered on every $x \in S$, such that for any $(x,y)  \in S$, the ball \( B(x, r_x) \) does not contain any point \( (x',y') \) from the set $S$ with label \( y' \neq y \). Each of these balls denotes the (local) margin of their center point. The global margin of the set $S$ is the minimum over radius of such balls. \[r_S = \underset{x \in S}{\min} \,\, r_x\] We now prove the “simplest" classifier, $f^*$, that realizes set $S$ has global margin(Definition \ref{globalmargin}) of $\frac{r_S}{2}$. Moreover, the decision boundary of this classifier must be equidistant between the closest pairs of points with different labels. Hence, the decision boundary is placed midway between the closest points, and the global margin complexity of such function is $\frac{2}{r_S}$.

 \begin{figure}[h]
	{\centering		
 \includegraphics[width=0.35\columnwidth]{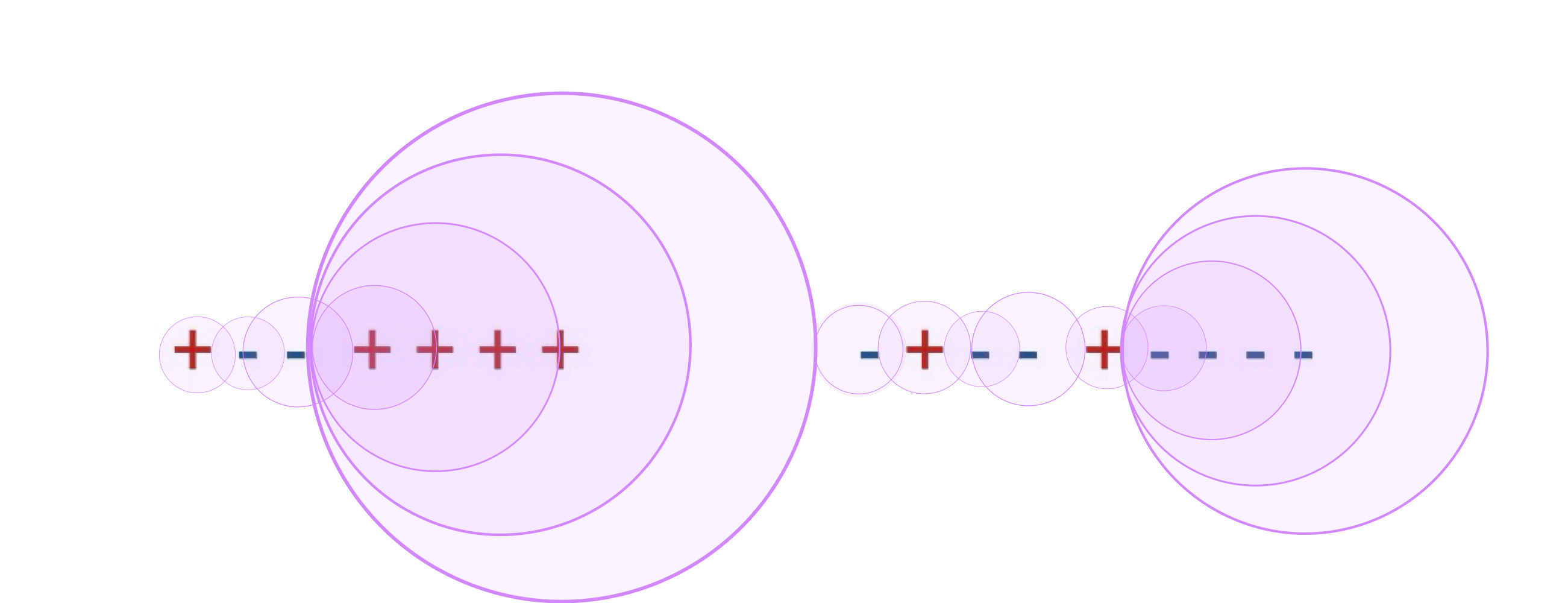} 
		\caption{Global Margin on 1-dimensional data. Let $r_S$ be the radius of the smallest ball, and correspond to the distance between the closest pair of points with different labels. Then, the function with minimum global margin complexity with respect to this set is $\frac{2}{r_S}$ complex.  \label{fig:figglobal}}}
 
\end{figure}

\begin{theorem}
Let \((\mathcal{M}, d_{\mathcal{M}})\) be a metric space, and  \(S = \{ (x_i, y_i) \mid x_i \in \mathcal{X},\ y_i \in \mathcal{Y} \}\) be a finite set of labeled points, where \(\mathcal{X}\) is the instance space and \(\mathcal{Y}\) is the label space. 
\begin{enumerate}
    \item For each \(x_i \in \mathcal{X}\), let $r_i$ be the minimum distance from \(x_i\) to any point with a different label.
   \[
   r_i = \inf_{\substack{x_j \in \mathcal{X} \\ y_j \neq y_i}} d_{\mathcal{M}}(x_i, x_j),
   \]
    \item Let $r_S$  denote the minimum distance between any two differently labeled points in \(S\).
   \[
   r_S = \min_{x_i \in \mathcal{X}} r_i = \min_{\substack{(x_i, y_i),\ (x_j, y_j) \in S \\ y_i \neq y_j}} d_{\mathcal{M}}(x_i, x_j),
   \]
  
\end{enumerate}
Consider a classifier \(f^*: \mathcal{X} \to \mathcal{Y}\) that realizes  \(S\), and obtains minimum global margin complexity (Definition \ref{globalmargin}) with respect to the set $S$. Then the global margin complexity of $f^*$ is $\frac{2}{r_S}$. Moreover, its decision boundary \(B_{f^*}\) is placed equidistantly between the closest pairs of points in $S$ with different labels.
\end{theorem}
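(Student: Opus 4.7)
My plan is to break the claim into three subclaims: (1) for every classifier $h$ realizing $S$, $r(h,S)\le r_S/2$, so $\mathcal{C}(h,S)\ge 2/r_S$; (2) there is a classifier that attains $r(h,S)=r_S/2$, matching the lower bound; (3) any minimizer must place its decision boundary on the equidistant set of every closest opposite-labeled pair. Steps (1) and (2) pin down the value $2/r_S$, and step (3) falls out of a short squeezing argument combining them with the hypothesis that $f^*$ is optimal.

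For (1), I would fix a closest opposite-labeled pair $(x_i,y_i),(x_j,y_j)\in S$ with $d_{\mathcal{M}}(x_i,x_j)=r_S$ and $y_i\ne y_j$. Working in a geodesic (e.g.\ Euclidean) space, let $M$ be a midpoint with $d(x_i,M)=d(x_j,M)=r_S/2$. The classifier $h$ must assign \emph{some} label to $M$; without loss of generality $h(M)=h(x_i)$, so $h(M)\ne h(x_j)$. Using $M$ as a witness in Definition~\ref{globalmargin},
\[
\inf_{x':\, h(x')\ne h(x_j)} d_{\mathcal{M}}(x_j,x') \;\le\; d_{\mathcal{M}}(x_j,M) \;=\; r_S/2 ,
\]
so $r(h,S)\le r_S/2$ and hence $\mathcal{C}(h,S)\ge 2/r_S$.

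For (2), I would verify that the $1$-nearest-neighbor classifier $h_{\mathrm{NN}}$ with respect to $S$ attains equality. For any $x_i\in S$, the closest point of a different predicted label lies on the portion of the $1$-NN decision boundary separating $x_i$ from its nearest opposite-labeled training example $x_k$; this boundary is (locally) the perpendicular bisector of $x_i$ and $x_k$, whose closest point to $x_i$ sits at distance $r_i/2\ge r_S/2$. Taking the minimum over $i$ gives $r(h_{\mathrm{NN}},S)=r_S/2$, i.e.\ $\mathcal{C}(h_{\mathrm{NN}},S)=2/r_S$, matching the bound in (1), so the minimum complexity is exactly $2/r_S$.

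For (3), let $f^*$ be any optimal classifier. By (1) and (2), $r(f^*,S)=r_S/2$, so every $x_i$ lies at distance at least $r_S/2$ from the decision boundary $B_{f^*}$. Pick any closest opposite pair $(x_i,x_j)$ with $d(x_i,x_j)=r_S$. Since $f^*(x_i)\ne f^*(x_j)$, the boundary $B_{f^*}$ separates them, so every geodesic from $x_i$ to $x_j$ crosses $B_{f^*}$ at some point $p$ with $d(x_i,p)+d(p,x_j)=r_S$ and $d(x_i,p),d(x_j,p)\ge r_S/2$. Both inequalities must be tight, forcing $d(x_i,p)=d(x_j,p)=r_S/2$. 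Thus $B_{f^*}$ passes through the midpoint of every closest opposite-labeled pair, i.e.\ lies on the equidistant (perpendicular-bisector) set.

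The main obstacle I anticipate is the midpoint step in (1): it uses the existence of a midpoint, which is free in Euclidean space (the implicit setting of the paper) and in any geodesic metric space, but would need a betweenness assumption in a fully general metric. Once (1) and (2) are in hand, (3) is a short consequence, so the real work is really just the lower bound argument.
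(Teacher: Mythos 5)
Your decomposition and overall argument mirror the paper's proof: fix a closest opposite-labeled pair, use a midpoint/betweenness argument to upper-bound the margin by $r_S/2$, then show the nearest-neighbor classifier attains it. One step in your part (2) is not quite right, though the conclusion survives. You write that the closest point of a different predicted label to $x_i$ lies on the perpendicular bisector of $x_i$ and its nearest opposite-labeled training example $x_k$, at distance exactly $r_i/2$. This is false in general: if another same-labeled training point lies between $x_i$ and $x_k$, the Voronoi cells of $x_i$ and $x_k$ are not adjacent, and the actual decision boundary near $x_i$ is determined by a \emph{different} pair and can sit strictly farther than $r_i/2$. The correct and simpler argument (essentially what the paper does, phrased as a contradiction) is: if $h_{\mathrm{NN}}(x')\ne y_i$, then the training point $x_\ell$ nearest to $x'$ has $y_\ell\ne y_i$ and $d(x',x_\ell)\le d(x',x_i)$, so by the triangle inequality $r_i\le d(x_i,x_\ell)\le 2\,d(x_i,x')$, giving $d(x_i,x')\ge r_i/2\ge r_S/2$; combined with your part (1) this pins down $r(h_{\mathrm{NN}},S)=r_S/2$. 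With that fix, your argument is sound.

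Two further remarks. In part (1), the ``without loss of generality $h(M)=h(x_i)$'' should be phrased as ``$h(M)$ can agree with at most one of $h(x_i),h(x_j)$'' so that the multi-class case is clearly covered; the conclusion is unaffected. Finally, your part (3) is a genuine addition: the paper only exhibits one optimizer (nearest-neighbor) whose boundary passes through the midpoints of closest opposite pairs, whereas the theorem's ``Moreover'' clause asserts this for \emph{every} minimizer $f^*$. Your squeeze argument ($d(x_i,p)+d(p,x_j)=r_S$ with both summands at least $r_S/2$) correctly establishes the stronger claim, modulo the same geodesic/betweenness assumption you already flagged, which the paper's own argument also implicitly uses.
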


\begin{proof}
 We first show that for any classifier \(f^*\) that realizes \(S\), the global margin \(r\) cannot exceed \(\frac{r_S}{2}\). Let \((x_p, y_p),\ (x_q, y_q) \in S\) be a pair of points such that:
 \(y_p \neq y_q\), and \(d_{\mathcal{M}}(x_p, x_q) = r_S\). Since \(r_S\) is the minimum distance between any two differently labeled points in \(S\), such a pair exists.
Consider any classifier \(f^*\) that correctly classifies \(S\). The minimum distance from \(x_p\) (or \(x_q\)) to the decision boundary cannot exceed \(\dfrac{r_S}{2}\).
Formally, since \(f^*\) must assign different labels to \(x_p\) and \(x_q\), there must exist a point \(x_b \in B_{f^*}\) such that:
\[
d_{\mathcal{M}}(x_p, x_b) + d_{\mathcal{M}}(x_b, x_q) = d_{\mathcal{M}}(x_p, x_q) = r_S.
\]
By the triangle inequality, and because \(x_b\) lies between \(x_p\) and \(x_q\), we have:
\[
d_{\mathcal{M}}(x_p, x_b) = d_{\mathcal{M}}(x_b, x_q) \geq 0.
\]
Since \(d_{\mathcal{M}}(x_p, x_b) + d_{\mathcal{M}}(x_b, x_q) = r_S\), the maximal possible value for \(d_{\mathcal{M}}(x_p, x_b)\) is \(\frac{r_S}{2}\).
Therefore, the minimum distance from any point in \(S\) to the decision boundary \(B_{f^*}\) satisfies:
\[
r \leq \frac{r_S}{2}.
\]
Now, we construct the classifier \(f^*\) (which will just be the nearest-neighbor classifier) that realizes \(S\) with a global margin \(r = \frac{r_S}{2}\).

Let \(f^*: \mathcal{X} \to \mathcal{Y}\) for any \(x \in \mathcal{X}\) assign:
  \[
  f^*(x) = \begin{cases}
    y_i, & \text{if } d_{\mathcal{M}}(x, x_i) < d_{\mathcal{M}}(x, x_j) \text{ for all } x_j \in S \text{ with } y_j \neq y_i, \\
    y_i \text{ or } y_j, & \text{if } d_{\mathcal{M}}(x, x_i) = d_{\mathcal{M}}(x, x_j) \text{ for some } x_j \in S, y_j \neq y_i.
  \end{cases}
  \]

This means, place the decision boundary \(B_{f^*}\) equidistantly between all pairs \((x_p, y_p), (x_q, y_q) \in S\) with \(y_p \neq y_q\) and \(d_{\mathcal{M}}(x_p, x_q) = r_S\).
 Since \(f^*\) assigns to each \(x_i \in S\) its correct label \(y_i\), it correctly classifies \(S\). We will now show that: \( r_{f^*} \geq \frac{r_S}{2} \). Assume, for contradiction, that the global margin \( r_{f^*}< \frac{r_S}{2} \). Then there exists \( x_i \in S \) and \( x_b \in B_{f^*} \) such that:
   \[
   d_{\mathcal{M}}(x_i, x_b) = r - \epsilon < \dfrac{r_S}{2},
   \]
   for some \( \epsilon > 0 \). Since \( x_b \in B_{f^*} \), there exists \( x_j \in S \) with \( y_j \neq y_i \) such that:
   \[
   d_{\mathcal{M}}(x_i, x_b) = d_{\mathcal{M}}(x_j, x_b).
   \]
 Applying the triangle inequality:
   \[
   d_{\mathcal{M}}(x_i, x_j) \leq d_{\mathcal{M}}(x_i, x_b) + d_{\mathcal{M}}(x_b, x_j) = 2 d_{\mathcal{M}}(x_i, x_b) < r_S.
   \]
   Which contradicts the definition of \( r_S \) as the minimum distance between differently labeled points in \( S \).
   Therefore, our assumption is false, and we conclude that:

   \[
   r_{f^*} \geq \frac{r_S}{2}.
   \]
Combining both directions we get 
\[
r_{f^*} = \frac{r_S}{2}.
\]
\end{proof}

 \begin{figure}[h]
	{\centering		
 \includegraphics[width=0.6\columnwidth]{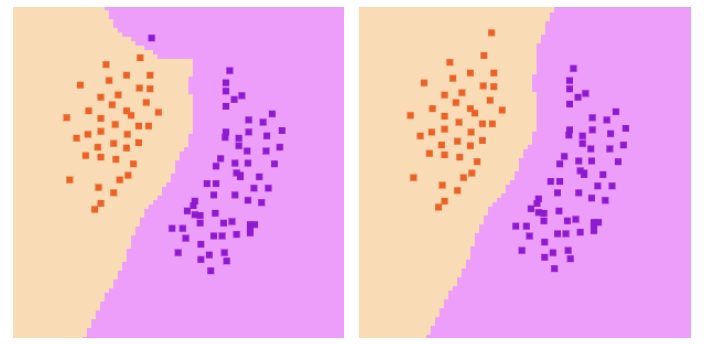} 
		\caption{Illustration of Global Margin with different labelings of the test point\label{fig:figglobaltest}}}
\end{figure}

\subsubsection{Proof of Theorem \ref{gmL} \label{sec:gmproof}}

\begin{definition}[($k,r$)-Classification Graph]
Given $S = \{ (x,y) | x \in  \mathcal{X}, y \in  \mathcal{Y}  \}$, where $\mathcal{X}$ denotes the instance space and $\mathcal{Y} = \{1,2,...,k\}$ the label space,  
we define the \textbf{($k,r$)-Classification Graph}, $\mathcal{G}_{r}$, as the graph produced by connecting every two points in $S$ of different labels with distance less than $r$. 
\end{definition}

\begin{remark}
    The Minimum Vertex Cover of \(\mathcal{G}_{r}\) corresponds to the smallest number of points that can be removed from \(S\) to make the data consistent with a classifier of global margin complexity  \(\frac{2}{r}\).
\end{remark}
Using the remark above, we now prove Theorem \ref{gmL}.

\smallskip

\begin{manualtheorem}{\ref{gmL}}
On a binary classification task, an optimal regularized robustly reliable learner  (Definition \ref{ER3}) can be implemented efficiently for Global Margin complexity (Definition \ref{globalmargin}). 
\end{manualtheorem}

\smallskip

\begin{proof}
Algorithm \ref{algorithmglobalmatch} is the solution.
We first compute the distance between every pair of training points, $S'$, with opposite labels. Let $\mathcal{R}   = \{0, r_0, r_1, ...,r_p\}$ denote the set of aforementioned distances with an added zero. Without loss of generality, suppose $0\leq r_0\leq  r_1, ...\leq r_p$. For the case of binary classification, the $(2,r)$-classification graph, $\mathcal{G}_{r}$, is bipartite. 
We construct each  $(2,r)$-classification graph of the set  $\{ \mathcal{G}_{r}(V^+,V^-, E_r)\}_{r \in \mathcal{R}}$ by putting every positive training point in $V^+$, every negative training point in $V^-$, and connecting every two training points of opposite labels with distances less than $r$ by an edge.
Since these graphs are bipartite, their Minimum Vertex Cover can be found efficiently by computing a Maximum Matching \citep{konig1950theory}. 
Notice that by increasing the radius, the Maximum Matching of classification graphs in the set only gets {\em larger}. Note that there is no edge in $\mathcal{G}_{0}$; hence the Matching is zero. We continue with computing the Maximum Matching of the classification graph with respect to the smallest radius, $\mathcal{G}_{r_{0}}$, which corresponds to the largest global margin complexity value. We continue to compute  $\{ \mathcal{G}_{r_i}\}_{r_i \in \mathcal{R}}$ in ascending order of $i$, and we stop as soon as we reach $p' \in [0,p]$ such that the Maximum Matching of $\mathcal{G}_{r_{p'}}$ is greater than $b$, the mistake budget.
Next, when the test point $x_{\text{test}}$ arrives, the learner begins by assigning it a negative label. We compute the distance of the test point, $x_{test}$ from every positive training point. We run a binary search on the possible values of radius, i.e., $[0,p']$. 
At every level $r_i$, we denote the set of training points labeled as positive with distance less than $r_{i+1}$ from $x_{\text{test}}$ by $\bar V_{test}^+$. 
We denote the cardinality of $\bar V_{test}^+$ by $\delta_{test}$, which is indeed the degree of $x_{\text{test}}$ at the current complexity level. If $\delta_{test}$ exceeds our mistake budget, $b$, we break and move to a smaller radius (higher complexity).
Otherwise, we add $\delta_{test}$ copies of the test point and connect each of them to a distinct point in $\bar V_{test}^+$. We denote the set of $\delta_{test}$ newly added edges by $\bar E_{test}$. We have constructed a new graph $\mathcal{G}_{test} = \mathcal{G}_{r_{i}}(V^+,V^- \cup \{x_{{test}_i}\}_{i \in [1,\delta_{test}]}, E_{r_i}\cup \bar E_{test})$, which ensures all the points adjacent to $x_{test}$ are contained in the Minimum Vertex Cover. 
We can compute the the Maximum Matching of $\mathcal{G}_{{test}}$ in time $ O(\delta_{test}.(\delta_{test}+|E|))$
by updating the Maximum Matching of $\mathcal{G}_{r_{i}}$ via computing at most $\delta_{test}$ augmenting paths.  Alternatively we can compute the Maximum Matching of $\mathcal{G}_{r_{i}}$ from scratch in time $O((\delta_{test}+|E|)^{1+o(1)})$ using the fast maximum matching algorithm of \cite{chen2022maximum}.
If the Maximum Matching at the current complexity level exceeds the poisoning budget, $b$, we move to a smaller radius (higher complexity), and if it is less than or equal to our mistake budget, $b$, we search to see if the condition still holds for a larger radius. We accordingly use the corresponding pre-computed representation graphs of the new complexity level.
We do the same thing for the test point labeled as positive. Finally,  $c_{\text{low}} = \min \{    \frac{2}{r_{\text{max}}^+}, \frac{2}{r_{\text{max}}^-} \}$, and $c_{\text{high}} = \max \{    \frac{2}{r_{\text{max}}^+}, \frac{2}{r_{\text{max}}^-}  \}$. We output $y_{\text{test}} = \underset{+,-}{\text{argmin}} \{    \frac{2}{r_{\text{max}}^+}, \frac{2}{r_{\text{max}}^-} \} $, along with $c_{\text{low}}, c_{\text{high}}$.
\label{globalmarginlastproof}
\end{proof}

\begin{remark}
 The running time for training-time pre-processing has two main components.  The first is construction of the classification graphs.  This involves computing all pairwise distances between training points of opposite labels and sorting them; each classification graph $\mathcal{G}_r$ is just a prefix in this list.  This portion takes time $O(n^2 \log n)$.  
 The second is computing maximum matchings in each.  We can do this from scratch for each graph (Algorithm \ref{alg:precomputing}). 
 Alternatively, we can scan the edge list in increasing order, and for each edge insertion just run a single augmenting path (since the maximum matching size can increase by at most 1 per edge insertion). This gives a total cost of at most $O(m^2)$, where $m$ is the number of edges in the graph at the time that the budget $b$ is first exceeded. The running time for test-time prediction is given above, and involves computing at most $\delta_{test}$ augmenting paths per graph in the binary search.
\end{remark}
\begin{remark}
    The proposed approach is especially fast for small values of $\delta_{test}$, and we can make it faster for large values of $\delta_{test}$, as well. When $\delta_{test}$ is large, one can instead remove $\bar V_{test}^+$ vertices from the original graph, $\mathcal{G}_{r_{i}}$, and re-compute the matching by iteratively finding augmenting paths. We expect the matching of the remaining graph to not exceed $b- \delta_{test}$, and if it does at any step of finding augmenting paths, we can halt. So, the overall time is at most $ O((b-\delta_{test}).(\delta_{test}+|E|))$.  Alternatively, \cite{6979023} proposed an efficient dynamic algorithm for updating the Maximum Matching of bipartite graphs that can be coupled with our setting and is particularly useful for {\em denser} classification graphs, running in time $O((|V^+|+|V^-|)^{3/2})$.
\end{remark}

\AtEndDocument{\begin{algorithm}[h]
    \SetAlgoLined
    \KwIn{$S:$ Train set, metric $\mathcal{M}$, $b$: Mistake budget }
    \For{every $(x, y), (x', y') \in S'$ with $y \neq y'$}{
        Compute $d_{\mathcal{M}}(x, x')$\;
    }
    Store the sorted distances and zero in $\mathcal{R}_{train} = \{0,r_0, r_1, \dots, r_{p_{train}}\}$\\
    Initialize $r \gets 0$, $p' \gets p_{train}$\\
    \While{$r \leq p_{train}$}{
        \For{each $\mathcal{G}_{r}(V^+, V^-, E_r)$ where $r \in \mathcal{R}_{train}$}{
       { $V^+ \gets \{x \mid (x, y) \in S,\; y = \textbf{‘+'}\}$\\
        $V^- \gets \{x \mid (x, y) \in S,\; y = \textbf{‘-'}\}$\\
        $E_r \gets \{e(u, v) \mid u \in V^+, v \in V^-, d_{\mathcal{M}}(u, v) < r$\}\\}
    }
        Compute $\textbf{MaxMatch}(\mathcal{G}_{r})$\\
        \If{$\textbf{MaxMatch}(\mathcal{G}_{r}) > b$}{
            $r_{p'} \gets r - 1$\\
            \textbf{break}\;
        }
        $r \gets r + 1$\;
    }
    $\mathcal{R}_{train} \gets  \{0,r_0, r_1, \dots, r_{p'}\}$ \\

\Return{ $\mathcal{R}_{train}, \{\mathcal{G}_{r}(V^+, V^-, E_r)\}_{r \in \mathcal{R}_{train}}$  }\;
\caption{Global Margin  (Definition \ref{globalmargin}) Learner Precomputing \label{alg:precomputing}}
\end{algorithm}}

\AtEndDocument{\begin{algorithm}[h]
\caption{Global Margin (Definition \ref{globalmargin}) Learner}
\SetAlgoLined
\KwIn{$x_{\text{test}}$: Test point, $S$: Train set, $b$: Mistake budget, $R_{\text{train}}$:  $\{0, r_0, r_1, \dots, r_{p'} \}$,  $\{G_r(V^+, V^-, E_r)\}_{r \in R_{\text{train}}}$}
Compute distances from $x_{\text{test}}$ to positive training points.
\\Initialize $low \gets 0$, $high \gets |R_{\text{train}}| - 1$,  $r_{\max}^+, r_{\max}^-  \gets \text{None}$.  \\
\While{$low < high$}{
    Set $mid \gets \lfloor (low + high) / 2 \rfloor$  \\
    Set $r_{\text{mid}} \gets R_{\text{train}}[mid]$ \\
    Define $V_{\text{test}}^{+} \gets \{ p \mid (p, y) \in S, y = \text{‘+'}, d_{\mathcal{M}}(p, x_{\text{test}}) < r_{\text{mid}} \}$ \\
    Compute $\delta_{\text{test}} \gets |V_{\text{test}}^{+}|$ \\
    \If{$\delta_{\text{test}} > b$}{
        Set $high \gets mid$ and continue.
    }

    Create $\delta_{\text{test}}$ copies of $x_{\text{test}}$, denoted as $\{x_{\text{test}, i}\}_{i \in[\delta_{\text{test}}]}$

    \For{$i \in [\delta_{\text{test}}]$}{
        Connect $x_{\text{test}, i}$ to $V_{\text{test}}^{+}[i]$ in $G_{r_{\text{mid}}}$
    }

    Update Maximum Matching of $G_{r_{\text{mid}}}$

    \If{\textbf{MaxMatch}($G_{r_{\text{mid}}}$) $> b$}{
        Set $high \gets mid$.
    }
    \Else{
        Set $low \gets mid + 1$\\
        Update $r_{\max}^- \gets R_{\text{train}}[mid - 1]$ if $mid-1 > 0$, otherwise $r_{\max}^- \gets \min_{p \in V_{\text{test}}^{+}} d_{\mathcal{M}}(p, x_{\text{test}})  $ 
    }
}
\textbf{Repeat the above for the negative training points} ($V_{\text{test}}^{-},{r_{\text{max}}^+}$)\;

\Return $\left(\frac{2}{r_{\text{max}}^+}, \frac{2}{r_{\text{max}}^-}\right)$\;
\label{algorithmglobalmatch}
\end{algorithm}}

\subsubsection{Proof of Theorem \ref{nphard} \label{nphardmargin}}

\begin{definition}[K-Regular Graph]
    A graph is said to be \emph{$K$-regular} if its every vertex has degree $K$.
\end{definition}

\begin{manualtheorem}{\ref{nphard}}
For multi-class classification with $k \geq 3$ classes, 
achieving an optimal regularized robustly reliable learner (Definition \ref{ER3}) for Global Margin complexity (Definition \ref{globalmargin}) is NP-hard, and can be done efficiently with access to ECM oracle (Definition \ref{ECM}).
 \end{manualtheorem}

\begin{proof}
We aim to show that finding the minimum \textsc{Vertex Cover} of a $(k,r)$-representation graph $\mathcal{G}_{(r)}$, for $k\geq 3$ is NP-hard.  It is known that finding the \textsc{Vertex Cover} on cubic graphs is APX-Hard, \cite{ALIMONTI2000123}.
Moreover, by Brooks' theorem, \cite{bona2016walk}, it is known that a 3-regular graph that is neither complete nor an odd cycle has a chromatic number of 3, and moreover one can find a 3-coloring for such a graph in polynomial time.
We now demonstrate that finding the minimum \textsc{Vertex Cover} for any $k$-colored 3-regular graph, where the graph is neither complete nor an odd cycle, can be reduced in polynomial time to the problem of finding the minimum \textsc{Vertex Cover} of a $(k,r)$-classification graph. This reduction is accomplished by embedding the vertices of the 3-regular graph into the edge space $\mathbb{R}^m$, where \(m = |E|\), the number of edges in the graph.
For each vertex $v \in V$, we construct its embedding as follows: if edge $e_i$ is incident to vertex $v$, then the $i$'th dimension of $v$'s embedding is set to 1; otherwise, it is set to 0. Since the graph is 3-regular, each vertex embedding contains exactly three entries of 1, corresponding to the edges incident to that vertex. 
Finally, each vertex embedding is given a label corresponding to its color in the given $k$-coloring.

The Hamming distance between two vertices in this embedding space encodes adjacency information. Specifically, if two vertices $v_1$ and $v_2$ are adjacent in the graph, their Hamming distance in the embedding space is 4; if they are not adjacent, their distance is 6. This embedding provides a direct correspondence between the adjacency relations in the original graph and the structure of the $(k,r)$-classification graph. 
Thus, any $k$-colored 3-regular graph can be reduced to a $(k,r)$-classification graph in polynomial time. Given that the \textsc{Vertex Cover} problem is hard for $k$-regular graphs, it follows that finding the minimum \textsc{Vertex Cover} in a $(k,r)$-classification graph is also hard. Therefore, implementing the learner $\mathcal{L}$ is NP-hard, completing the proof.

\textbf{With ECM Oracle (Definition \ref{ECM}) Access: }  
Let \( S' \) represent the corrupted training set. To evaluate the test point \( x_{\text{test}} \) with label \( y_{\text{test}} \), we proceed as follows. First, we augment \( S' \) by adding \( b+1 \) copies of \( x_{\text{test}} \) each labeled as \( y_{\text{test}} = y_1 \). This ensures that the mistake budget of the ECM algorithm is not depleted by the test point \( x_{\text{test}} \), as the additional copies force the algorithm to allocate its mistake budget elsewhere. 

We then run the ECM algorithm on this modified dataset, and denote the complexity returned by the oracle as \( c_{y_1} \). Next, we repeat this procedure for the remaining possible labels \( y_2, \dots, y_m \), each time augmenting the dataset with \( b+1 \) copies of \( x_{\text{test}} \) labeled according to \( y_i \). Let the corresponding complexities returned by the ECM oracle be denoted as \( c_{y_2}, \dots, c_{y_k} \). 
Without loss of generality, 
assume \( c_{y_1}\leq  c_{y_2} \leq \dots  \leq c_{y_k} \)
We define:
\[
c_{\text{low}} = c_{y_1}, \quad c_{\text{high}} = c_{y_2}
\]
where \( c_{\text{low}} \) represents the minimum complexity value among the different labelings of \( x_{\text{test}} \), and \( c_{\text{high}} \) represents the second-lowest complexity value.

Finally, the predicted label for \( x_{\text{test}} \) is determined as:
\[
y = \underset{y_1, y_2, \dots, y_k}{\text{argmin}} \{ c_{y_1}, c_{y_2}, \dots, c_{y_k} \}
\]
That is, the label \( y \) corresponding to the smallest complexity value is chosen. The learner then outputs the triplet \( (y, c_{\text{low}}, c_{\text{high}}) \), where \( y \) is the predicted label, \( c_{\text{low}} \) is the lowest complexity value, and \( c_{\text{high}} \) is the second-lowest complexity value, providing a guarantee on the prediction.

    \label{regtoclass}
\end{proof}

\begin{example}
We now aim to demonstrate why such a reduction to the edge space is necessary, and to clarify that not all 3-regular graphs, which are neither complete nor odd cycles, inherently belong to the class of $(k,r)$-Classification Graphs within their original metric space.
Consider the well-known Petersen graph,
which is a 3-regular and is neither complete nor an odd cycle; hence is 3-colorable.
While it satisfies the structural properties for 3-colorability, the graph does not behave as a 3-classification graph when embedded in $\mathbb{R}^2$. Specifically, the metric space properties are not satisfied. 

\begin{figure}[h]
	{\centering
 \includegraphics[width=0.35\columnwidth]{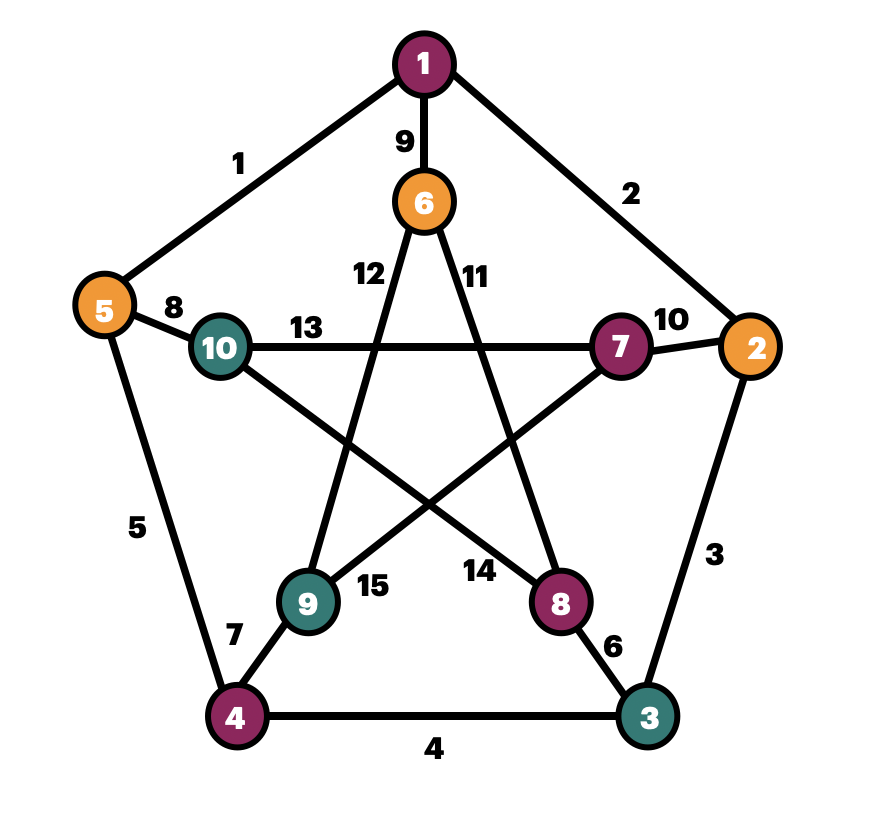} 
		\caption{Petersen Graph}}
  \label{fig:fig3}
\end{figure}

\noindent
For example, the vertices $v_6$ and $v_{10}$ are closer to each other than the vertices $v_6$ and $v_9$, yet vertices $v_6$ and $v_{10}$ are not connected in the original graph, violating the requirements of a classification graph in its natural embedding. This example highlights that the geometric constraints imposed by the original metric space are too restrictive for certain 3-regular graphs to be used directly as $(k,r)$-classification graphs. To resolve this issue, we embed the vertices of the Petersen graph into the edge space, $\mathbb{R}^m$, where $m = |E|$ is the number of edges in the graph.

\begin{multicols}{2}
    \begin{itemize}
        \item $v_1: [1, 1, 0, 0, 0, 0, 0, 0, 1, 0, 0, 0, 0, 0, 0]$,
        \item $v_2:  [0, 1, 1, 0, 0, 0, 0, 0, 0, 1, 0, 0, 0, 0, 0]$,
        \item $v_3: [0, 0, 1, 1, 0, 1, 0, 0, 0, 0, 0, 0, 0, 0, 0]$,
        \item $v_4: [0, 0, 0, 1, 1, 0, 1, 0, 0, 0, 0, 0, 0, 0, 0]$,
        \item $v_5: [1, 0, 0, 0, 1, 0, 0, 1, 0, 0, 0, 0, 0, 0, 0]$,
        \item $v_6: [0, 0, 0, 0, 0, 0, 0, 0, 1, 0, 1, 1, 0, 0, 0]$,
        \item $v_7:  [0, 0, 0, 0, 0, 0, 0, 0, 0, 1, 1, 0, 0, 0, 1]$,
        \item $v_8: [0, 0, 0, 0, 0, 1, 0, 0, 0, 0, 0, 0, 0, 1, 1]$,
        \item $v_9: [0, 0, 0, 0, 0, 0, 1, 0, 0, 0, 0, 0, 1, 1, 0]$,
        \item $v_{10}: [0, 0, 0, 0, 0, 0, 0, 1, 0, 0, 0, 1, 1, 0, 0]$.
    \end{itemize}
\end{multicols}

\noindent
  This transformation ensures that the embeddings satisfy the metric space properties required for classification graphs since it preserves the required distance properties for classification: two adjacent vertices in the Petersen graph, such as $v_6$ and $v_9$, have a Hamming distance of 4, while non-adjacent vertices such as $v_6$ and $v_{10}$ have a distance of 6. By embedding the graph into the edge space, we transform it into a $(k,r)$-classification graph that respects the desired metric space properties.  
\end{example}

\subsection{Degree of Polynomial} \label{sec:degree}
\begin{theorem}
On a binary classification task, an optimal regularized robustly reliable learner, $\mathcal{L}$,  (Definition \ref{ER3}) can be implemented efficiently using ECM oracle (Definition \ref{ECM}) for complexity measure Degree of Polynomial (Definition \ref{polynomial}).  
\end{theorem}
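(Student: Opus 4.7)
The plan is to mimic the ``force-the-label'' reduction already used in the ECM portion of the proof of Theorem~\ref{nphard}. Since Degree of Polynomial is data-independent, the generic Algorithm~\ref{alg-generic-data-indep} already describes the optimal RRR learner: output $y=h_{S'}(x_{test})$ where $h_{S'}$ is the minimum-degree sign-polynomial making at most $b$ mistakes on $S'$, $c_{low}=\CC(h_{S'})$, and $c_{high}=\min\{\CC(h): h \text{ makes at most } b \text{ mistakes on } S' \text{ and } h(x_{test})\neq y\}$. My task is simply to realize these two optimizations using calls to the ECM oracle.

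For each binary label $y\in\{+,-\}$, I would form the augmented training set $S'_y = S'\cup \{(x_{test},y)\}^{b+1}$ (that is, $b+1$ duplicated copies of $x_{test}$ with label $y$) and invoke the ECM oracle on $(S'_y,b)$; call the resulting minimum complexity $c_y$. The key observation is that within a total mistake budget of $b$ no feasible polynomial can mislabel all $b+1$ duplicates, so every hypothesis feasible for the augmented ECM instance satisfies $h(x_{test})=y$ and makes at most $b$ mistakes on the original $S'$. Conversely, any polynomial with at most $b$ mistakes on $S'$ and $h(x_{test})=y$ is feasible in the augmented instance. Hence $c_y$ equals the minimum polynomial degree over exactly the hypotheses we care about, namely those in $\mcH_{b,S'}$ that label $x_{test}$ as $y$.

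With $c_+$ and $c_-$ in hand, I would output $y = \arg\min_{y\in\{+,-\}} c_y$, $c_{low} = \min(c_+,c_-)$, and $c_{high} = \max(c_+,c_-)$. Matching against Algorithm~\ref{alg-generic-data-indep}: $c_{low}$ equals $\CC(h_{S'})$ because one of the two labels is necessarily consistent with the globally minimum-complexity hypothesis $h_{S'}$, while $c_{high}$ equals the minimum complexity among feasible hypotheses that disagree with $y$ at $x_{test}$, which in the binary case is just $c_{\neg y}$. Optimality then follows from Theorem~\ref{thm:empiricalopt} applied to this data-independent complexity measure.

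I do not expect a serious obstacle here, since the statement is really a corollary of the generic reduction of data-independent RRR learning to two ECM calls per test point. The only point that deserves a brief sanity check is that the $b{+}1$ duplicated test points do not secretly consume any portion of the mistake budget against $S'$: because ECM's mistake budget is held fixed at $b$ (not $2b{+}1$), any feasible hypothesis must classify all duplicates correctly, leaving the entire budget $b$ available for errors on $S'$, which is exactly what the definitions of $c_{low}$ and $c_{high}$ demand. Efficiency of the overall learner therefore reduces, as claimed, to two invocations of the ECM oracle per test point.
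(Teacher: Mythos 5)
Your proposal is correct and uses the same key idea as the paper's proof, namely appending $b{+}1$ duplicated copies of $x_{\text{test}}$ to force a feasible ECM hypothesis to take a prescribed label there. The only (inconsequential) difference is that you make two forced-label ECM calls and take the $\min$/$\max$, whereas the paper makes one unconstrained ECM call on $S'$ to obtain $h_{S'}$, $y=h_{S'}(x_{\text{test}})$, and $c_{\text{low}}$ directly, and then a single forced-label call with $\neg y$ to obtain $c_{\text{high}}$; both variants issue two oracle calls and yield identical outputs.
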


\begin{proof}
Given a corrupted training set \( S' \), and a mistake budget \( b \), we first run the ECM algorithm on the training set \( S' \), which outputs a classifier \( h_{S'} \) that minimizes the complexity while making at most \( b \) mistakes on \( S' \). Let the complexity of \( h_{S'} \) be denoted by \( c_{\text{low}} = \mathcal{C}(h_{S'}) \).
The classifier \( h_{S'} \) is the minimum complexity classifier among all hypotheses that make no more than \( b \) mistakes on \( S' \).
Using the classifier \( h_{S'} \), we label the test point \( x_{\text{test}} \), i.e., \( y = h_{S'}(x_{\text{test}}) \). 
We modify the training set by adding \( b+1 \) copies of the test point \( x_{\text{test}} \), but with the label opposite to \( y \), i.e., the added points have label \( \neg y \). Let this modified set be denoted as \( S'' \).
The addition of \( b+1 \) copies of \( x_{\text{test}} \) ensures that any classifier produced by ECM will be forced to change the label of \( x_{\text{test}} \) if it is to remain within the mistake budget.
We now run ECM on the modified training set \( S'' \), which outputs a new classifier. The complexity of this new classifier is denoted by \( c_{\text{high}} \).
Since the classifier now labels \( x_{\text{test}} \) as \( \neg y \), the complexity \( c_{\text{high}} \) represents the minimum complexity required to label \( x_{\text{test}} \) differently from \( h_{S'}(x_{\text{test}}) \). By construction, \( c_{\text{high}} \) must be greater than or equal to \( c_{\text{low}} \) due to the added complexity of labeling the test point differently.
Finally, we output the triple \( (y, c_{\text{low}}, c_{\text{high}}) \) as our guarantee.

\end{proof}

\subsection{Interval Probability Mass}\label{sec:ipm}

\begin{definition}[Label Noise \cite{pmlr-v20-biggio11} Adversary]
    Label noise was formally introduced in \cite{pmlr-v20-biggio11}. Consider the set of original points $S = \{ \{(x_i, y_i)\}_{i=1}^n  | x \in  \mathcal{X}, y \in  \mathcal{Y}  \}$, where $\mathcal{X}$ denote the instance space and $\mathcal{Y} $ the label space. Concretely, given a mistake budget \( b \), the label noise adversary is allowed to alter the labels of at most \( b \) points in the dataset \( S \). That is, the Hamming distance between the original labels \( S \) and the modified labels \( S' \), denoted by \( d_H(S, S') \), must satisfy the constraint:
\[
d_H(S, S') = \sum_{i=1}^n \textbf{1}(y_i \neq y_i' \mid x_i = x_i') \leq b.
\]
Let \( \mathcal{A}(S) \) denote the sample corrupted by adversary \( \mathcal{A} \). For a mistake budget $b$, let \( \mathcal{A}_b \) be the set of adversaries with corruption budget \( b \) and \( \mathcal{A}_b (S) = \{S' \mid d(S, S') \leq b\} \) denote the possible corrupted training samples under an attack from an adversary in \( \mathcal{A}_b \). Intuitively, if the given sample is \( S' \), we would like to give guarantees for learning when \( S' \in \mathcal{A}_b \) for some (realizable) un-corrupted sample \( S \). 
\label{noiseflipad}
\end{definition} 
\begin{theorem}
    For the binary classification task, an optimal regularized robustly reliable learner, $\mathcal{L}$,  (Definition \ref{ER3}) can be implemented efficiently for complexity measure Interval Probability Mass (Definition \ref{IntervalProbabilityMassComplexity}) with the label noise adversary (Definition \ref{noiseflipad}). 
    \label{IPMCB}
\end{theorem}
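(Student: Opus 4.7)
Since the label noise adversary only flips labels (not the underlying $x$-coordinates), every candidate original training set $\tilde S$ has the same $x$-values as $S'$. Consequently the Interval Probability Mass complexity $\CC(h,\tilde S)=\sum_i \tfrac{n}{1+|I_i\cap\tilde S|}$ depends only on the $x$-coordinates of $\tilde S$ and on the intervals of $h$; it therefore equals $\CC(h,S')$ whenever $h$ makes at most $b$ mistakes on $S'$ (so that a consistent $\tilde S$ with $S'\in\mathcal A_b(\tilde S)$ exists). Hence implementing the optimal RRR learner reduces to solving, given $S'$, $b$ and $x_{test}$, the following two optimizations for each label $y\in\{+,-\}$: find the minimum of $\CC(h,S')$ over classifiers $h$ that (i) make at most $b$ mistakes on $S'$ and (ii) satisfy $h(x_{test})=y$. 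Call this minimum $c_{test,y}$. Then the optimal learner outputs $y^\star=\arg\min_y c_{test,y}$, $c_{low}=c_{test,y^\star}$, and $c_{high}=c_{test,\neg y^\star}$.

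The plan is to compute $c_{test,+}$ and $c_{test,-}$ efficiently using a bi-directional dynamic program analogous to the one in the proof of Theorem~\ref{theoremNAB}. First, sort the training points so that $x_1<x_2<\cdots<x_n$. A binary classifier on $\mathbb R$ is determined by its maximal monochromatic intervals, which WLOG have boundaries placed between consecutive training points (or at $\pm\infty$). Define the forward table $F[i][j][y]$ as the minimum value of $\sum_k \tfrac{n}{1+|I_k|}$ taken over alternating-label partitions of $(-\infty,\text{split}]$ (with split between $x_i$ and $x_{i+1}$), where the rightmost interval has label $y$ and the total number of points in $\{x_1,\dots,x_i\}$ whose observed label disagrees with the assigned label is exactly $j$. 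Let $m(a,b,y)=|\{k\in[a,b]:y'_k\neq y\}|$ be the number of mistakes incurred when a single interval covering $x_a,\dots,x_b$ is labeled $y$; precompute these via prefix sums. Then
\[
F[i][j][y]=\min_{-1\le i'<i}\Bigl(F[i'][j-m(i'+1,i,y)][\neg y]+\tfrac{n}{(i-i')+1}\Bigr),
\]
with base case $F[-1][0][\cdot]=0$ representing the ``virtual'' split at $-\infty$. Symmetrically define a reverse table $R[i][j][y]$ for partitions of $[\text{split},+\infty)$. Both tables have $O(nb)$ states with $O(n)$-time transitions, so they can be built in time $O(n^2 b)$ at training.

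At test time, let $i_L$ be the largest index with $x_{i_L}<x_{test}$. The interval containing $x_{test}$ has some label $y$ and contains training points $x_p,\dots,x_q$ for some $p\le i_L+1$, $q\ge i_L$ (with $p=q+1$ allowed to indicate an interval containing no training points). Hence
\[
c_{test,y}=\min_{\substack{p\le i_L+1,\ q\ge i_L\\ j_L+j_R+m(p,q,y)\le b}}\Bigl\{F[p-1][j_L][\neg y]+\tfrac{n}{(q-p+1)+1}+R[q+1][j_R][\neg y]\Bigr\},
\]
where for each choice of $(p,q,y,j_L)$ the optimal $j_R$ is obtained by taking $F$ and $R$ at their smallest possible mistake-index values summing to $\le b-m(p,q,y)$. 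This search enumerates $O(n^2 b)$ combinations and hence runs in polynomial time. Correctness of the DP follows by the standard argument that every alternating-interval partition decomposes uniquely into a left part ending at some $x_{p-1}$, the test interval $(x_{p-1},x_{q+1})$, and a right part starting at $x_{q+1}$, with contributions and mistake counts adding appropriately; together with the reduction above this gives exactly the optimal $c_{low}$ and $c_{high}$ demanded by Definitions~\ref{definitionRRR} and \ref{ER3}.

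The main obstacle I anticipate is bookkeeping at the boundaries: correctly modeling the leftmost and rightmost semi-infinite intervals (which contribute $\tfrac{n}{1+k}$ depending on how many training points they cover), and handling the edge cases in the test-time combination where the test interval contains no training points ($p=q+1$) or extends to $\pm\infty$ (when $p=1$ or $q=n$). Once those base cases of $F,R$ and of the test-time minimization are defined consistently, the correctness argument is routine and the overall procedure matches the generic Algorithm~\ref{alg-generic-data-indep} pointwise, yielding an optimal regularized robustly reliable learner.
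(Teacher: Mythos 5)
Your proof is correct and takes essentially the same bidirectional-DP route as the paper's appendix proof: forward and reverse tables indexed by position and mistake count, combined at test time by enumerating the extent of the interval containing $x_{\text{test}}$ and the split of the mistake budget across the two sides. The DP state differs only cosmetically --- the paper carries a third dimension recording the start index $k$ of the current interval so that each transition is local, whereas your $F[i][j][y]$ drops that dimension and instead loops over the previous boundary $i'$ inside the transition --- and both give the same $O(n^2 b)$ training cost and the same test-time combination. One thing you make explicit that the paper leaves implicit and is worth stating: because the label noise adversary only flips labels, every candidate original set $\tilde S$ with $S'\in\mathcal A_b(\tilde S)$ has the same $x$-coordinates as $S'$, so the training-data-dependent IPM complexity $\CC(h,\tilde S)$ does not depend on the choice of $\tilde S$; the minimization over $\tilde S$ in Definition~\ref{ECM} is therefore vacuous, which is precisely why a purely combinatorial DP on $S'$ suffices and why Algorithm~\ref{alg-generic-data-indep} applies directly to this training-data-dependent measure.
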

\begin{proof}
    First, we define the DPs that store the scores used, then we use the DP table to compute the complexity level when the test point and mistake budget arrive. We define $DP+, DP-, DP'+, DP'-$ each of which are 3D tables of size $n \times (n+1) \times n$. The first dimension denote the position of the current data point, namely for $DP+$ and $DP-$, we denote the rightmost point by index 0, and the leftmost point by index $n-1$. As for $DP'+$ and $DP'-$, the first dimension denote the position of the current data point in the reverse sequence, i.e., we denote the rightmost point by index $n-1$, and the leftmost point by index $0$. The second dimension denote the number of mistakes made up to the current point, which can vary between $0$ to the number of points so far. 
    Lastly, the third dimension denote the starting point of the interval containing the current point, denoted by the first dimension. We provide the proof of correctness for $DP+$, and it is similar for the other three.

\textbf{Base Case}
Consider \( i = 0 \) (the first point in the sequence): Initialize the entire table to infinity.
\begin{itemize}
    \item \textbf{If \( a[0] = \texttt{'+'} \):}
    \begin{itemize}
        \item We initialize \( DP_+[0][0][0] = \frac{n}{2} \) because the complexity is $\frac{n}{2}$ with no mistakes made, and the rightmost point is positive. 
    \end{itemize}
    \item \textbf{If \( a[0] = \texttt{'-'} \):}
    \begin{itemize}
        \item We set \(  DP_+[0][1][0] = \frac{n}{2}\), as we can use the mistake budget and flip the negative label to a positive.
    \end{itemize}
\end{itemize}

\textbf{Inductive Hypothesis:} Assume that for all positions up to $i-1$, the table \texttt{DP\_+[$i-1$][$j$][$k$]} correctly stores the minimum complexity score for all possible configurations of mistakes and interval boundaries.

\textbf{Inductive Step:} We will show that the table \texttt{DP\_+[$i$][$j$][$k$]} correctly computes the minimum complexity score at position $i$, based on the following cases:

\begin{itemize}
    \item \textbf{Case 1: $a[i] = \texttt{'-'}$}
    
    \begin{itemize}
        \item \textbf{if $k = i-1$:} \texttt{DP\_+} requires the $i$'th point to be a positive; thus, this point must be removed. We need to decrement the mistake count $j$ of the $i-1$'th point by one and use it to remove this point. Note that the $i-1$ must be a negative point in order to have $k = i-1$.
        \[
        \text{\texttt{DP\_+[$i$][$j$][$k$]}} =  \min_{k', j' \in [0,j-1]}(\text{\texttt{DP\_-[$i-1$][$j'$][$k'$]}}) + \frac{n}{2}
        \]
        \item \textbf{if $k<i-1$:} Then we flip the label of this point, and update the total score.
         \[
            \text{\texttt{DP\_+[$i$][$j$][$k$]}} = \min_{j' \in [0,j-1]} \text{\texttt{DP\_+[$i-1$][$j'$][$k$]}} -  \frac{n}{i-k+1} + \frac{n}{i-k+2}
            \]
    \end{itemize}
    
    \item \textbf{Case 2: $a[i] = \texttt{'+'}$}
    
    \begin{itemize}
        \item  \textbf{if $k = i-1$:} The $i-1$ must be a negative point in order to have $k = i-1$.
        \[
        \text{\texttt{DP\_+[$i$][$j$][$k$]}} =  \min_{k', j' \in [0,j] }(\text{\texttt{DP\_-[$i-1$][$j'$][$k'$]}}) + \frac{n}{2}
        \]
          \item \textbf{if $k<i-1$:} Then we update the total score.
         \[
            \text{\texttt{DP\_+[$i$][$j$][$k$]}} = \min_{j' \in [0,j]}\text{\texttt{DP\_+[$i-1$][$j'$][$k$]}} -  \frac{n}{i-k+1} + \frac{n}{i-k+2}
            \]
    \end{itemize}

\end{itemize}
Thus, the DP algorithm correctly computes the complexity measure as defined, proving its correctness for \texttt{DP\_+}.

\textbf{Computing the test label efficiently:}
We now use the DP tables to obtain the test label.  Note that our approach does not require re-training to compute the test label efficiently.
Once we receive the test point's position along with adversary's budget, $b$, we compute the \emph{exact} minimum complexity needed to label it point as positive and negative. 
We denote the test point's position by $test\_pos$, there are four different formations for the label of test point's right most and left most neighbor. 
Given $b$, we iterate over all possible divisions of mistake budget, as well as the position of the starting point of the previous intervals from the left and the right side of the test point in each of these four formations. Define the minimum complexity to label the test point as positive, $c_+$ and the minimum complexity to label the test point as negative, $c_-$. Then, $c_{\text{low}} = \min \{ c_+, c_- \}$, and $c_{\text{high}} = \max \{ c_+, c_- \}$. We output $y = \underset{+,-}{\text{argmin}} \{ c_+, c_- \} $, along with $c_{\text{low}}, c_{\text{high}}$.
\end{proof}
\begin{remark}
    Theorem \ref{IPMCB} can be generalized to classification tasks with more than two classes.
\end{remark}

\AtEndDocument{\begin{algorithm}[h]
\caption{DP Score of Interval Probability Mass \ref{IPMCB}  with Label Noise \ref{noiseflipad}}
\SetAlgoLined
\KwIn{$a$: Train set}
\KwOut{$DP_+$, $DP_-$, $DP'_+$, $DP'_-$}

\For{$i = 1$ \KwTo $n$}{
    \For{$j = 0$ \KwTo $i+2$}{
        \For{$k = 0$ \KwTo $i+1$}{
            \If{$a[i]$ is ‘+'}{
                \eIf{$k == i$}{
                   { $DP_+[i][j][k] \leftarrow \min_{k',j' \in [0,j]}(DP_-[i-1][j'][k']) + \frac{n}{2}$\\
                    $DP_-[i][j][k] \leftarrow \min_{k',j' \in [0,j]-1}(DP_+[i-1][j'][k']) + \frac{n}{2}$\;}
                }{
                   { $DP_+[i][j][k] \leftarrow \min_{j' \in [0,j]}  DP_+[i-1][j'][k] - \frac{n}{i-k+1} + \frac{n}{i-k+2}$\\
                    $DP_-[i][j][k] \leftarrow \min_{j' \in [0,j-1]}  DP_-[i-1][j'][k] - \frac{n}{i-k+1} + \frac{n}{i-k+2}$\;}
                }
            }
             \If{$a[i]$ is ‘-'}{
                \eIf{$k == i$}{
                   { $DP_+[i][j][k] \leftarrow \min_{k',j' \in [0,j-1]}(DP_-[i-1][j'][k']) + \frac{n}{2}$\\
                    $DP_-[i][j][k] \leftarrow \min_{k',j' \in [0,j]}(DP_+[i-1][j'][k']) + \frac{n}{2}$\;}
                }{
                  {  $DP_+[i][j][k] \leftarrow \min_{j' \in [0,j-1]} DP_+[i-1][j'][k] - \frac{n}{i-k+1} + \frac{n}{i-k+2}$\\
                    $DP_-[i][j][k] \leftarrow \min_{j' \in [0,j]}  DP_-[i-1][j'][k] - \frac{n}{i-k+1} + \frac{n}{i-k+2}$\;}
                }
            }

         \If{$a\_reversed[i]$ is ‘+'}{
                \eIf{$k == i$}{
                    {$DP'_+[i][j][k] \leftarrow \min_{k', j' \in [0,j] }(DP'_-[i-1][j'][k']) + \frac{n}{2}$\\
                    $DP'_-[i][j][k] \leftarrow \min_{k',j' \in [0,j-1]}(DP'_+[i-1][j'][k']) + \frac{n}{2}$\;}
                }{
                   { $DP'_+[i][j][k] \leftarrow \min_{j' \in [0,j]} DP'_+[i-1][j'][k] - \frac{n}{i-k+1} + \frac{n}{i-k+2}$\\
                    $DP'_-[i][j][k] \leftarrow \min_{j' \in [0,j-1]}  DP'_-[i-1][j'][k] - \frac{n}{i-k+1} + \frac{n}{i-k+2}$\;}
                }
            }
            \eIf{$a\_reversed[i]$ is ‘-'}
            {
                \eIf{$k == i$}{
                   { $DP'_+[i][j][k] \leftarrow \min_{k', j' \in [0,j-1]}(DP'_-[i-1][j'][k']) + \frac{n}{2}$\\
                    $DP'_-[i][j][k] \leftarrow \min_{k', j' \in [0,j]}(DP'_+[i-1][j'][k']) + \frac{n}{2}$\;}
                }{
                   { $DP'_+[i][j][k] \leftarrow \min_{j' \in [0,j-1]}  DP'_+[i-1][j'][k] - \frac{n}{i-k+1} + \frac{n}{i-k+2}$\\
                    $DP'_-[i][j][k] \leftarrow \min_{j' \in [0,j]}  DP'_-[i-1][j'][k] - \frac{n}{i-k+1} + \frac{n}{i-k+2}$\;}
                }
            }
            
        }
    }
}

\Return $DP_+$, $DP_-$, $DP'_+$, $DP'_-$\;
\end{algorithm}}

\end{document}